\documentclass{article}

\usepackage{microtype}
\usepackage{graphicx}
\usepackage{subcaption}
\usepackage{booktabs}

\DeclareMathAlphabet{\dutchcal}{U}{dutchcal}{m}{it}

\PassOptionsToPackage{hidelinks}{hyperref}
\usepackage{hyperref}

\let\realaddcontentsline\addcontentsline

\usepackage[preprint]{icml2026}

\usepackage{amsmath}
\usepackage{amssymb}
\usepackage{mathtools}
\usepackage{amsthm}

\usepackage[capitalize,noabbrev]{cleveref}

\usepackage{etoc}

\usepackage{custom}
\definecolor{ETHPetrol}{RGB}{0,120,148}
\colorlet{MacroColor}{ETHPetrol}

\newcommand{\mymacro}[1]{#1}

\definecolor{pyFuncName}{HTML}{0066BB}
\newcommand{\samplepathfn}{{\ttfamily\bfseries\color{pyFuncName}SamplePath}\xspace}
\newcommand{\samplecorpusfn}{{\ttfamily\bfseries\color{pyFuncName}SampleCorpus}\xspace}
\newcommand{\samplepreprocessfn}{{\ttfamily\bfseries\color{pyFuncName}Preprocess}\xspace}

\newcommand{\defn}[1]{\textbf{#1}}

\newcommand{\justification}[1]{%
    \refstepcounter{equation}%
    \tag{\textcolor{black!50}{\footnotesize{#1},} \theequation}
}

\newcommand{\mparam}{\mymacro{\boldsymbol{\theta}}}

\newcommand{\symba}{\mymacro{\texttt{a}}}
\newcommand{\symbb}{\mymacro{\texttt{b}}}

\newcommand{\pathweight}[1]{{ \mymacro{\boldsymbol{w}(#1)}}}
\newcommand{\pathinnerweight}[1]{\mymacro{\overline{\boldsymbol{w}}(#1)}}
\newcommand{\pathfeaturesweight}[2]{\mymacro{\boldsymbol{w}_{#1}(#2)}}
\newcommand{\pathfeaturesinnerweight}[2]{\mymacro{\overline{\boldsymbol{w}}_{#1}(#2)}}
\newcommand{\pathlength}{\mymacro{m}}

\newcommand{\baseplus}{\mathbin{\mymacro{\oplus}}}
\newcommand{\basetimes}{\mathbin{\mymacro{\otimes}}}
\newcommand{\basezero}{\mymacro{\zero}}
\newcommand{\baseone}{\mymacro{\one}}
\newcommand{\basesum}{\bigoplus}

\newcommand{\basestar}[1]{#1^\ast}

\definecolor{refactorhlcolor}{rgb}{1,1,0.45}
\newcommand{\baseversion}[1]{\mymacro{#1_{\automaton}}}
\newcommand{\semiringorder}{\mymacro{R}}
\newcommand{\binningtarget}{\mymacro{N}}
\newcommand{\circled}[1]{%
  \begin{tikzpicture}[baseline=(char.base)]
    \node[draw,circle,inner sep=0.05mm,outer sep=0.05mm] (char) {\footnotesize $#1$};
  \end{tikzpicture}%
}
\newcommand{\cyclicMonoid}{\mymacro{C_{\semiringorder, 1}}}

\newcommand{\binningsemiringFun}[1]{\mymacro{\left\llbracket#1\right\rrbracket}}
\newcommand{\binningsemiring}{\mymacro{\binningsemiringFun{\semiringset}_{\semiringorder}}}

\newcommand{\binningplus}{\mathbin{\mymacro{\circled{\oplus}}}}
\newcommand{\binningtimes}{\mathbin{\mymacro{\circled{\otimes}}}}
\newcommand{\binningzero}{\mymacro{\circled{\zero}}}
\newcommand{\binningone}{\mymacro{\circled{\one}}}

\newcommand{\binningexp}[2]{#1^{\scalebox{0.65}{$\binningtimes$} #2}}

\newcommand{\binningindex}{\mymacro{n}}

\newcommand{\otherbinningindex}{\mymacro{j}}

\newcommand{\samplingWfsa}{\mymacro{\wfsa'}}
\newcommand{\atLeastSamplingWfsa}{\mymacro{\wfsa''}}

\newcommand{\pautomaton}{\mymacro{p_{\automaton}}}
\newcommand{\pneural}{\mymacro{p_{\boldsymbol{\theta}}}}
\newcommand{\pprefix}{\mymacro{p_{\pi}}}

\newcommand{\symbolinterv}{\mymacro{\symbola_{I}}}
\newcommand{\transinterv}{\mymacro{\trans_{I}}}
\newcommand{\stateinterv}{\mymacro{\stateq_{I}}}
\newcommand{\varDom}{\mymacro{\mathcal{V}}}

\newcommand{\pdens}{{\mymacro{ p}}}

\newcommand{\ind}[1]{\mathbbm{1} \{ #1 \}}

\newcommand{\R}{{\mymacro{ \mathbb{R}}}}

\newcommand{\alphabet}{{\mymacro{ \Sigma}}}
\newcommand{\alphabeteos}{{\mymacro{\alphabet_{\eos}}}}

\newcommand{\kleene}[1]{{\mymacro{#1^\ast}}}

\newcommand{\klstr}{\mymacro{\boldsymbol{\sigma}}}

\newcommand{\str}{{\mymacro{\boldsymbol{\sigma}}}}

\newcommand{\strlt}{{\mymacro{ \str_{<\tstep}}}}

\newcommand{\sym}{{\mymacro{\sigma}}}

\newcommand{\defeq}{\mathrel{\stackrel{\textnormal{\tiny def}}{=}}}

\newcommand{\set}[1]{{\mymacro{\{ #1 \}}}}
\newcommand{\nrange}[1]{{\mymacro{[#1]}}}
\newcommand{\nrangez}[1]{{\mymacro{[#1]_0}}}

\newcommand{\nstates}{{\mymacro{ |\states|}}}
\newcommand{\nsymbols}{{\mymacro{ |\alphabet|}}}

\newcommand{\tstep}{{\mymacro{ t}}}

\newcommand{\pLM}{\mymacro{\pdens}}

\newcommand{\pPrefix}{{\mymacro{\overrightarrow{\pLM}}}}

\newcommand{\eos}{{\mymacro{\textsc{eos}}}}

\newcommand{\symt}{{\mymacro{ \sym_{\tstep}}}}

\newcommand{\semiringtuple}{{\mymacro{\left(\semiringset, \oplus, \otimes, \zero, \one \right)}}}

\newcommand{\semiringset}{{\mymacro{ \mathbb{K}}}}
\newcommand{\zero}{{\mymacro{\mathbf{0}}}}
\newcommand{\one}{{\mymacro{\mathbf{1}}}}

\newcommand{\lifttrans}{\mymacro{\mathcal{L}_{\featurefunc_{\trans}}}}
\newcommand{\liftfunc}{\mymacro{\mathcal{L}}}

\newcommand{\featurefunc}{\mymacro{\phi}}
\newcommand{\liftedautomaton}{\automaton_{\featurefunc}}

\newcommand{\psum}{\mymacro{\boldsymbol{Z}}\xspace}

\newcommand{\symbola}{\mymacro{a}}

\newcommand{\numsamples}{\mymacro{K}}

\newcommand{\dointervene}{\mymacro{\mathrm{do}}}

\newcommand{\stringset}{\dataset}

\newcommand{\propertyEl}{\mymacro{p}}

\newcommand{\automaton}{{\mymacro{ \mathcal{A}}}}
\newcommand{\wfsa}{{\mymacro{ \automaton}}}

\newcommand{\stateq}{{\mymacro{ q}}}
\newcommand{\statep}{{\mymacro{ p}}}
\newcommand{\stater}{{\mymacro{ r}}}
\newcommand{\states}{{\mymacro{ Q}}}

\newcommand{\trans}{{\mymacro{ \delta}}}
\newcommand{\extendedtrans}{{\mymacro{ \widehat{\delta}}}}

\newcommand{\outarcs}{{\mymacro{ \mathcal{E}}}}

\newcommand{\prevq}{{\mymacro{ \iota }}}
\newcommand{\nextq}{{\mymacro{ \varphi }}}
\newcommand{\weight}{{\mymacro{w}}}

\newcommand{\apath}{{\mymacro{ \boldsymbol \pi}}}

\newcommand{\paths}{{\mymacro{ \Pi}}}

\newcommand{\initf}{{\mymacro{ \lambda}}}
\newcommand{\finalf}{{\mymacro{ \rho}}}

\newcommand{\qinit}{{\mymacro{ q_{\iota}}}}

\newcommand{\anedge}[4]{{\mymacro{#1 \xrightarrow{#2 / #3} #4}}}
\newcommand{\wfsatuple}{{\mymacro{ \left( \states, \alphabet, \trans, \initf, \finalf \right)}}}

\newcommand{\backwardweight}[1]{{\mymacro{ \boldsymbol{\beta}_{#1}}}}

\newcommand{\fsaAcr}{{\mymacro{FA}}\xspace}
\newcommand{\wfsaAcr}{{\mymacro{WFA}}\xspace}

\newcommand{\pfsaAcr}{{\mymacro{PFA}}\xspace}

\newcommand{\dpfsaAcr}{{\mymacro{DPFA}}\xspace}

\newcommand{\property}{{\mymacro{\mathcal{P}}}}

\newcommand{\ifcondition}{\textbf{if }}

\newcommand{\otherwisecondition}{\textbf{otherwise}}

\newcommand{\stateName}[1]{\textsc{#1}}
\newcommand{\freeState}{\stateName{rep}}
\newcommand{\evenState}{\stateName{even}}
\newcommand{\oddState}{\stateName{odd}}

\usepackage{xspace}
\newcommand{\parity}{\rmfamily\textsc{parity}\xspace}

\def\1{\mathbf{1}}
\newcommand{\dataset}{{\mymacro{\sD}}}

\def\vu{{{\mymacro{ \bm{u}}}}}
\def\vv{{{\mymacro{ \bm{v}}}}}
\def\vw{{{\mymacro{ \bm{w}}}}}

\def\evv{{{\mymacro{ v}}}}

\def\mM{{{\mymacro{ \bm{M}}}}}

\def\gG{{{\mymacro{ \mathcal{G}}}}}

\def\sD{{{\mymacro{ \mathcal{D}}}}}

\def\sP{{{\mymacro{P}}}}

\def\sT{{{\mymacro{ \mathcal{T}}}}}

\newcommand{\E}{{\mymacro{ \mathbb{E}}}}

\newcommand{\N}{{\mymacro{ \mathbb{N}}}}

\newcommand{\KL}{{\mymacro{ \mathrm{D}_{\mathrm{KL}}}}}

\usepackage{mathabx}

\newcommand{\bigO}{{\mymacro{\mathcal{O}}}}

\DeclareMathSymbol{\mlq}{\mathord}{operators}{``}
\DeclareMathSymbol{\mrq}{\mathord}{operators}{`'}

\newcommand{\rvFont}[1]{\mymacro{\mathrm{#1}}}

\newcommand{\gcm}{\mymacro{\gG}}

\newcommand{\edges}{\mymacro{\boldsymbol{E}}}

\newcommand{\gcmProb}{\mymacro{\mathbb{P}}}
\newcommand{\gcmpar}{\mymacro{\mathrm{PA}}}
\newcommand{\rvXs}{\mymacro{\boldsymbol{X}}}
\newcommand{\rvVar}{\rvFont{V}}
\newcommand{\rvVars}{\mymacro{\boldsymbol{V}}}

\newcommand{\rvData}{\rvFont{D}}
\newcommand{\rvMeasure}{\rvFont{M}}
\newcommand{\rvAutomaton}{\rvFont{A}}
\newcommand{\rvLmodel}{\rvFont{L}}

\newcommand{\rvPath}{\rvFont{\boldsymbol{\Pi}}}

\newcommand{\fpsCfa}{\mymacro{v}}
\newcommand{\fpsCfb}{\mymacro{u}}

\usepackage{mathrsfs}
\newcommand{\setFont}[1]{\mymacro{\dutchcal{#1}}}
\newcommand{\elFont}[1]{\mymacro{#1}}%

\newcommand{\setX}{\setFont{X}}
\newcommand{\setXel}{\elFont{x}}

\newcommand{\salgX}{\mymacro{\mathfrak{X}}}
\newcommand{\rvX}{\rvFont{X}}
\newcommand{\salgXel}{\mymacro{x}}
\newcommand{\salgXevent}{\mymacro{X}}

\newcommand{\setY}{\setFont{Y}}

\newcommand{\salgY}{\mymacro{\mathfrak{Y}}}
\newcommand{\salgYel}{\mymacro{Y}}

\newcommand{\rvY}{\rvFont{Y}}

\newcommand{\setZel}{\elFont{z}}

\newcommand{\rvZ}{\rvFont{Z}}

\newcommand{\mkernel}{\mymacro{\kappa}}

\newcommand{\pmeas}{\mymacro{\mathbb{P}}}

\newcommand{\automataEl}{\elFont{a}}
\newcommand{\automataSet}{\setFont{A}}

\newcommand{\dataSet}{\setFont{D}}
\newcommand{\dataSetEl}{\elFont{d}}

\newcommand{\lmodelSet}{\setFont{L}}

\newcommand{\lmodelSetEl}{\elFont{l}}

\newcommand{\measSet}{\setFont{M}}

\makeatletter

\newcommand{\drm}{\mathrm{d}}

\newcommand{\monoidop}{\mathbin{\mymacro{\odot}}}
\newcommand{\srmult}{\otimes}
\newcommand{\srplus}{\oplus}

\icmltitlerunning{Causally Evaluating the Learnability of Formal Language Tasks}

\begin{document}

\etocdepthtag.toc{mainmatter}

\twocolumn[
  \icmltitle{Causally Evaluating the Learnability of Formal Language Tasks}

  \icmlsetsymbol{equal}{*}

\begin{icmlauthorlist}
  \icmlauthor{Vésteinn Snæbjarnarson}{eth,ku}
  \icmlauthor{Anej Svete}{eth}
  \icmlauthor{Josef Valvoda}{equal,eth}
  \icmlauthor{Reda Boumasmoud}{eth}
  \icmlauthor{Brian DuSell}{eth}
  \icmlauthor{Ryan Cotterell}{eth}
\end{icmlauthorlist}
\icmlaffiliation{eth}{ETH Zürich}
\icmlaffiliation{ku}{University of Copenhagen}

  \icmlcorrespondingauthor{Vésteinn Snæbjarnarson}{vest.snae@gmail.com}
  \icmlcorrespondingauthor{Ryan Cotterell}{ryan.cotterell@inf.ethz.ch}

  \icmlkeywords{Machine Learning}

  \vskip 0.3in
]

\printAffiliationsAndNotice{*Now at Google.}

\begin{abstract}
Language models, as multi-task learners, acquire a wide range of abilities during training.
A fundamental question is how much task-specific data is needed to learn a given task.
Answering this for natural language is difficult: tasks are hard to delineate and can confound one another.
To rigorously investigate the relationship between data frequency and learnability, we turn to a controlled setting using formal languages induced from probabilistic finite automata.
These serve as a methodological testbed to demonstrate that standard correlational evaluation practices are inherently flawed.
To enable causal analysis, we introduce the \emph{binning semiring}, an algebraic object that lets us control how often a targeted property occurs in a sampled corpus.
We formulate the experimental pipeline as a causal graphical model and derive decomposed Kullback--Leibler divergence metrics to measure the learnability of specific sub-tasks.
Our experiments show that evaluating learnability without causal intervention leads to incorrect conclusions due to confounders in correlational analysis, and serve as a warning about correlational pitfalls in natural-language settings.\looseness=-1

\vspace{.5em}
\hspace{1.25em}\includegraphics[width=1.25em,height=1.25em]{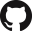}{\hspace{.75em}\parbox{\dimexpr\linewidth-2\fboxsep-2\fboxrule}{\url{https://github.com/vesteinn/causal-eval-formal-languages}}}
\end{abstract}

\section{Introduction}
Training a language model (LM) is, in effect, a massive multi-task learning problem.
From a single corpus, a modern LM learns to translate languages \citep[e.g., English into Dutch;][]{du-etal-2025-optimising}, generate Python code \citep{codex,li2023starcoder,roziere2024code}, write job applications \citep{jobseek}, assist with mathematical proofs \citep{lewkowycz2022solving} \textit{inter multa alia}.
A natural question is then: how much task-specific data does an LM need to learn a given task?
The answer surely depends on how similar the task is to others in the training data---a programming language with Python-like syntax may be acquired from a few examples given the prevalence of Python in the training data.
In contrast, a programming language with an unfamiliar paradigm may demand substantially more.
However, determining precisely how much training data a particular task requires in a multi-task setting is difficult: in a large natural-language corpus, tasks are hard to delineate and label, and different tasks can share underlying similarities, introducing confounders.
We must isolate and control these confounders to determine how data scarcity affects task performance.\looseness=-1

The goal of this paper is to replicate a multi-task learning problem in a controlled setting.
To do so, we turn to formal languages, a common testbed for studying the learnability of neural architectures \citep{pmlr-v80-lake18a,michalenko2018finite,ijcai2020p708,NEURIPS2020_e5a90182,valvoda-etal-2022-benchmarking,deletang2023neural,svete-etal-2024-transformers,10.5555/3722577.3722860,butoi2025training}.
Specifically, we study how well we can train an LM to approximate a probabilistic finite automaton (\pfsaAcr{}).
When learning a \pfsaAcr{}, each symbol, state, and transition of the underlying automaton can be seen as a task, and these tasks are interrelated through the automaton's structure.
Inter-task confounders, therefore, arise in the formal-language learning setting as well.
Yet, the standard methodology for evaluating language models' ability to learn probabilistic formal languages does not account for them \citep{borenstein-etal-2024-languages,someya-etal-2025-information}.\looseness=-1

As an illustrative example, consider the \pfsaAcr in \cref{fig:simple_machine}. 
It assigns non-zero probability to two kinds of strings: those of the form $\symba\kleene{\symba}$, and those containing an odd number of $\symbb$'s---a probabilistic version of \parity \citep{hahn-rofin-2024-sensitive} prefixed by $\symbb$. Theory suggests that transformers easily learn $\symba\kleene{\symba}$ \citep{NEURIPS2024_13d7f172,li2026characterizing} but not \parity \citep{hahn-rofin-2024-sensitive}. 
The parameter $\eta$ governs which branch the \pfsaAcr{} enters, and thus how often each task appears in a corpus sampled from it. 
But $\eta$ does more than control task frequency: it also shifts the overall distribution the model must learn, including the expected string length, since the two branches yield strings of different expected lengths. So when a model trained on a corpus with few \parity strings performs poorly on \parity, we cannot tell whether the difficulty stems from there being too few \parity strings to learn from, the shift in the corpus's length distribution that co-occurs with the change in $\eta$, or the fact that \parity may be inherently hard for this architecture. Disentangling these requires causal intervention \citep{pearl2016causal}.\looseness=-1

\begin{figure}[t]
    \centering
    \begin{tikzpicture}[>={Latex[length=1.8mm,width=1.4mm]}, node distance=2cm, auto]

        \begin{scope}[shift={(1.25cm,0)}]
            \node[state, initial above, minimum size=9mm] (q0) at (0,0) {$\qinit$};
            \node[state, accepting, minimum size=9mm] (even) at (3,0) {\small \oddState};
            \node[state, minimum size=9mm] (odd) at (5,0) {\small \evenState};
            \node[state, accepting, minimum size=9mm] (free) at (-2,0) {\small \freeState};

            \draw[->] (q0) -- node[above] {$\symbb  / (1-\eta)$} (even);
            \draw[->] (even) edge[bend left=20]  node[above] {$\symbb / 0.45$} (odd);
            \draw[->] (odd) edge[bend left=20]  node[below] {$\symbb / 0.5$} (even);
            \draw[->] (q0) -- node[above] {$\symba / \eta$} (free);

            \draw[->] (free) edge[loop above] node {$\symba / 0.9$} (free);
            \draw[->] (even) edge[loop above] node {$\symba / 0.45$} (even);
            \draw[->] (odd) edge[loop above] node {$\symba / 0.5$} (odd);
        \end{scope}
    \end{tikzpicture}
    
    \caption{A \pfsaAcr recognizing the language $\symba\kleene{\symba} \cup \symbb \kleene{\symba} \kleene{(\symbb\kleene{\symba}\symbb\kleene{\symba})}$. The transitions left of $\qinit$ correspond to the star-free language ${\symba\kleene{\symba}}$ (which is easier for transformers to learn), and the ones to the right correspond to the \parity language (which is harder for transformers to learn). If $\eta \in [0, 1]$ is low, we might wrongly assume that ${\symba\kleene{\symba}}$ is harder to learn.
    }
    \vspace{-15pt}
    \label{fig:simple_machine}
\end{figure}

To carry out this disentanglement, we contrast two ways of generating data from a given weighting of a \pfsaAcr{}, distinguishing the \emph{causal} effect of certain string properties on learnability---e.g., how often a string causes its \pfsaAcr{} to pass through a given transition---from the \emph{correlational} one.
The \emph{correlational} setup is the one implicit in standard practice: sample many weightings of the \pfsaAcr{}, draw a corpus from each, train a language model on each corpus, and correlate the realized count of the target property with the trained model's performance.
The \emph{causal} setup, by contrast, intervenes directly on the sampling procedure to produce a corpus with an exact count for a chosen transition under any given weighting of the automaton, decoupling the count from the rest of the corpus's properties.
In both setups, we measure task-specific performance with a bespoke decomposition of the Kullback--Leibler divergence (\cref{appendix:decomposedkl}) between the trained model and the \pfsaAcr and plot it against the occurrence count. We show results for a transformer \citep{vaswanitransf} and an LSTM \citep{hochreiter1997long} in \cref{fig:simple_machine_plots}.
Even in this simple setting, the causal (solid) and correlational (dotted) curves differ.
The setup also recovers the known LSTM--transformer gap on \parity \citep{hahn-rofin-2024-sensitive}: in the \oddState\ panel of \cref{fig:simple_machine_plots}, the LSTM's per-state Kullback--Leibler divergence drops about an order of magnitude below the transformer's as the \parity count grows, whereas the transformer's curve plateaus. 
Unlike prior work that only argues for the gap's existence, we trace it directly as a function of \parity occurrences.\looseness=-1

On the theoretical side, to enable causal evaluation of how well language models can learn \pfsaAcr{}s, we introduce a new algebraic tool: the binning semiring (\cref{sec:binning-semiring}).\footnote{While the binning semiring can be used on any model whose computations factor over a semiring, our study focuses on \pfsaAcr{}s.}
It lifts each transition weight into a vector indexed by the number of target occurrences along a path, so summing path weights gives the conditional distribution over corpora with a given count---exactly the distribution the causal setup samples from.
We present methods (\cref{sec:sampling}) for efficiently sampling datasets from a \pfsaAcr{} such that a target property is represented exactly $\binningtarget$ times among $\numsamples$ samples.\looseness=-1

\begin{figure}
    \centering
    \includegraphics[width=\linewidth]{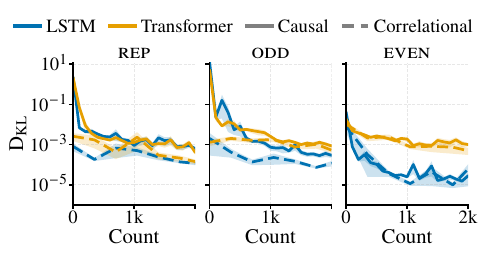}
       \caption{Each panel plots state-wise decomposed Kullback--Leibler divergence ($\KL$; lower is better) against the number of times the target state occurs in training. \emph{Correlational} (dotted): sample many weightings of the \pfsaAcr in \cref{fig:simple_machine} and record the realized count. \emph{Causal} (solid): intervene in the sampling procedure to sample specific counts. Gaps reflect confounding from the joint distribution over weights.\looseness=-1
    }
    \vspace{-15pt}
    \label{fig:simple_machine_plots}
\end{figure}

Empirically, we conduct experiments in three settings to show that causal evaluation matters when studying how well language models learn \pfsaAcr{}s.
(i) We vary weights over the \pfsaAcr{} of \cref{fig:simple_machine} (already shown in \cref{fig:simple_machine_plots}), which combines probabilistic versions of a star-free language and \parity.
(ii) We sample a large set of \pfsaAcr{}s with dozens of states to consider the learnability of properties independent of the specific configuration of the \pfsaAcr{} encoding them.
(iii) We consider fixed \pfsaAcr{} topologies while varying the weightings.
Across all three settings, correlational estimates of task learnability systematically diverge from their causal counterparts; in setting (ii), the correlational trend even \emph{inverts} (\cref{fig:mce-large}), indicating that automata that naturally produce few instances of a task tend to have structural properties that confound learnability estimates.
These results demonstrate that standard correlational methodology can spoil conclusions about formal language learnability.\looseness=-1

\section{Preliminaries}
\label{sec:causal-study}

\paragraph{Basics.}
We write $\N \defeq \{1, 2, 3, \ldots \}$ for the natural numbers.
For $N \in \N$, we write $\nrange{N} \defeq \set{1, \ldots, N}$ and $\nrangez{N} \defeq \set{0, 1, \ldots, N}$.
\vspace{-1em}

\paragraph{Language Models.}
An \defn{alphabet} $\alphabet$ is a finite non-empty set of \defn{symbols}. 
The \defn{Kleene closure} $\kleene{\alphabet}$ is the set of all strings of symbols from $\alphabet$.
A \defn{language model} (LM) $\pLM$ is a probability distribution over $\kleene{\alphabet}$.
The \defn{prefix probability}\footnote{Note that prefix probabilities do \emph{not} define a distribution over $\kleene{\alphabet}$, as they do not sum to 1.} $\pPrefix(\str)$ of a string $\str \in \kleene{\alphabet}$ is the total probability of strings starting with $\str$ under $\pLM$; the \defn{conditional} prefix probability $\pPrefix(\str' \mid \str)$ extends this to a continuation $\str' \in \kleene{\alphabet}$:
\begin{align}
    \pPrefix(\str) &\defeq \sum_{\str' \in \kleene{\alphabet}} \pLM(\str \str'),
    & \pPrefix(\str' \mid \str) &\defeq \frac{\pPrefix(\str \str')}{\pPrefix(\str)}, \label{eq:prefprob}
\end{align}
with $\pPrefix(\str' \mid \str) \defeq 0$ when $\pPrefix(\str) = 0$.
Many LMs define $\pLM(\str)$ autoregressively via
\begin{equation} \label{eq:lm-form}
    \pLM(\str) \defeq \pPrefix(\eos\mid\str) \prod_{\tstep = 1}^{|\str|} \pPrefix(\symt \mid \strlt),
\end{equation}
where $\eos \notin \alphabet$ is a designated \underline{e}nd-\underline{o}f-\underline{s}tring symbol.
We write $\alphabeteos \defeq \alphabet \cup \set{\eos}$ for the extended alphabet.
We denote a neural LM as $\pneural$, where $\mparam$ collects the neural LM's learnable parameters.\looseness=-1

\paragraph{Algebra and Formal Languages.}
Let $\semiringset$ be a set, $\monoidop$ a binary operation, and $\one \in \semiringset$. 
We say that $( \semiringset, \monoidop, \one )$ is a \defn{monoid} if
\begin{enumerate*}[label=\textit{(\roman*)}, nosep, noitemsep]
    \item $\semiringset$ is closed under $\monoidop$;
    \item $\monoidop$ is associative; and
    \item $\one$ is the unit of $\monoidop$: $\forall a \in \semiringset: \one \monoidop a = a \monoidop \one = a$.
\end{enumerate*}
We say that a monoid is \defn{commutative} if $\forall a, b \in \semiringset \colon a \monoidop b = b \monoidop a$.
A \defn{semiring} is a tuple $\semiringtuple$ where
\begin{enumerate*}[label=\textit{(\roman*)}, nosep, noitemsep]
    \item $(\semiringset, \srplus, \zero )$ is a commutative monoid;
    \item $(\semiringset, \srmult, \one )$ is a monoid;
    \item $\srmult$ left- and right-distributes over $\srplus$: $a \srmult ( b \srplus c) = (a \srmult b) \srplus (a \srmult c)$ and $( b \srplus c) \srmult a = (b \srmult a) \srplus (c \srmult a)$; and
    \item $\srmult$ with $\zero$ annihilates $\semiringset$: $\zero \srmult a = a \srmult \zero = \zero$.
\end{enumerate*}
The operation $\srmult$ is referred to as \defn{multiplication} and $\srplus$ as \defn{addition}.
A semiring admitting well-behaved infinite sums is called \defn{complete}, i.e., in a complete semiring, we can define \defn{asteration} $\kleene{a} \defeq \bigoplus_{i = 0}^{\infty} a^{i} = \bigoplus_{i = 0}^{\infty} \bigotimes_{j=1}^{i} a$. 
The \defn{real semiring} is $(\R, +, \times, 0, 1)$ with the standard addition and multiplication.
Adjoining $\infty$ to positive reals in the natural way gives a complete semiring with $\kleene{a} = \frac{1}{1 - a}$ for $a < 1$ and $\kleene{a} = \infty$ otherwise.\looseness=-1

\paragraph{Finite Automata.}
A \textbf{weighted finite automaton} (\wfsaAcr{}) $\automaton$ over a semiring $\semiringtuple$ is a tuple $\wfsatuple$ where $\states$ is a finite set of states; $\alphabet$ is an alphabet; $\trans \colon \states \times \alphabet \times \states \rightarrow \semiringset$ is a transition function, where we say that $\automaton$ has transition $\anedge{p}{a}{\weight}{q}$ with weight $\weight$ if $\trans(p, a, q) = \weight$; and $\initf\colon \states \to \semiringset$ and $\finalf\colon \states \to \semiringset$ are the initial and final weight functions, respectively.
A \defn{path} $\apath$ in $\automaton$ is a finite sequence of transitions connected by states: $\anedge{q_0}{a_1}{w_1}{q_1} \cdots \anedge{q_{\pathlength-1}}{a_{\pathlength}}{w_{\pathlength}}{q_{\pathlength}}$.
We write $|\apath| = \pathlength$ for $\apath$'s length.
The \defn{inner weight} of $\apath$ is $\pathinnerweight{\apath} \defeq w_1 \srmult \cdots \srmult w_{\pathlength}$, its \defn{weight} is $\pathweight{\apath} \defeq \initf(q_0) \srmult \pathinnerweight{\apath} \srmult \finalf(q_{\pathlength})$, and its \defn{yield} is $a_1 \cdots a_{\pathlength}$.
$\paths(\automaton)$ denotes the set of all paths in $\automaton$.
We say that $\automaton = \wfsatuple$ is \textbf{deterministic} (a \dpfsaAcr{}) if, for every $\statep \in \states, \sym \in \alphabet$, there is at most one $\stateq \in \states$ such that $\anedge{\statep}{\sym}{\weight}{\stateq} \in \trans$ with $\weight \neq \zero$, and there is a single state $\qinit$ with $\initf(\qinit) \neq \zero$. 
In this case, we refer to $\qinit$ as the \defn{initial state}.
A deterministic \wfsaAcr{} can have at most one path with non-$\zero$ path weight yielding a string $\str \in \kleene{\alphabet}$ from the initial state $\qinit$.
The \defn{backward weight} $\backwardweight{\automaton}(\stateq)$ of a state $\stateq \in \states$ is the sum of the weights of all paths starting at $\stateq$:
\begin{equation} \label{eq:backwardweight-def}
\backwardweight{\automaton}(\stateq) \defeq  \bigoplus_{\substack{\apath \in \paths(\automaton),\prevq(\apath) = \stateq}}  \pathinnerweight{\apath} \srmult \finalf(\nextq(\apath)),
\end{equation}
where $\prevq(\apath)$ and $\nextq(\apath)$ denote the first and last states of $\apath$, respectively.
The \defn{allsum} of a \dpfsaAcr{} $\automaton$ is $\psum\defeq\sum_{\apath\in\paths(\automaton)}\pathweight{\apath} = \backwardweight{\automaton}(\qinit)$.\looseness=-1

\paragraph{Probabilistic Automata.}
We say that a real-weighted \wfsaAcr with non-negative weights is \defn{probabilistic} (a \pfsaAcr{}) if $\sum_{\stateq \in \states} \initf(\stateq) = 1$ and, for all states $\stateq \in \states$, $\finalf(\stateq) + \sum_{\anedge{\stateq}{\sym}{w}{\stateq'} \in \trans} w = 1$.
A \pfsaAcr induces a probability distribution over $\kleene{\alphabet}$, where the probability of a string $\str$ is the sum of the weights of all paths yielding $\str$ from the initial state $\qinit$.
For a \dpfsaAcr{}, each string $\str \in \kleene{\alphabet}$ admits at most one path; let $\extendedtrans(\str) \in \states$ denote the state reached by reading $\str$ from $\qinit$. 
A \dpfsaAcr{} then realizes an autoregressive language model  (\cref{eq:lm-form}) whose next-symbol distribution at history $\str$ is supported on $\alphabeteos$: $\pPrefix(\sym \mid \str) = w$ when $\anedge{\extendedtrans(\str)}{\sym}{w}{\stateq'} \in \trans$ and $\pPrefix(\eos \mid \str) = \finalf(\extendedtrans(\str))$.

\begin{figure*}[t]
    \centering
    \begin{subfigure}[c]{0.29\textwidth}
        \centering
        \begin{tikzpicture}[node distance=3cm and 3cm,on grid,>=Stealth]
            \node[obs] (A) {$\scriptstyle \rvAutomaton$};
            \node[obs, below=of A] (data) {$\scriptstyle \rvData$};
            \node[obs, right=of data] (L) {$\scriptstyle \rvLmodel$};
            \node[obs, above=of L] (M) {$\scriptstyle \rvMeasure$};
            \edge {A} {data,M};
            \edge {data} {L};
            \edge {L} {M};
        \end{tikzpicture}
    \end{subfigure}
    \begin{subfigure}[r]{0.7\textwidth}
        \small
            \begin{align}
                &\pmeas(\rvMeasure \mid \rvData\in\property)
                = \int_{\automataSet}\int_{\lmodelSet}\int_{\property}
                \pmeas(\rvMeasure \mid \drm\automataEl, \drm\lmodelSetEl)
                \pmeas(\drm\lmodelSetEl \mid \drm\dataSetEl)
                \pmeas(\drm\dataSetEl \mid \drm\automataEl, \rvData\in\property)
                \textcolor{ETHBlue}{\pmeas(\drm \automataEl \mid \rvData\in \property)}
                \label{eq:obseq}
                \\
                &\pmeas(\rvMeasure \mid \dointervene(\rvData\in\property))
                = \int_{\automataSet}\int_{\lmodelSet}\int_{\property}
                \pmeas(\rvMeasure \mid \drm\automataEl, \drm\lmodelSetEl)
                \pmeas(\drm\lmodelSetEl \mid \drm\dataSetEl)
                \pmeas(\drm\dataSetEl \mid \drm\automataEl)
                \textcolor{ETHGreen}{\pmeas(\drm \automataEl)}
                \label{eq:causeq}
            \end{align}
        \vspace{0.4em}
    \end{subfigure}
    \caption{\textbf{Left:} Graphical causal model for evaluating the effect of intervening on the \emph{dataset} $\rvData$ on model \emph{performance} $\rvMeasure$, given a \pfsaAcr{} language model $\rvAutomaton$ and \emph{trained model} $\rvLmodel$. \textbf{Right:} Conditioning (\cref{eq:obseq}) induces \textcolor{ETHBlue}{\(\pmeas(\automataEl \mid \rvData\in \property)\)}; intervening (\cref{eq:causeq}) restores the marginal \textcolor{ETHGreen}{\(\pmeas(\automataEl)\)}. The variables are described in \cref{sec:causality}.\looseness=-1
  }
    \label{fig:distribution-equations}
\end{figure*}

\paragraph{Graphical Causal Models.}
\label{sec:gcm}
We now introduce graphical causal models and interventions. 
We adopt the following notation: sets are denoted with uppercase calligraphic font, e.g., $\setX$ and $\setY$; subsets of a set $\setX$ are uppercase letters, e.g., $\salgXevent$; and elements of a set $\setX$ are lowercase letters, e.g., $\setXel$.
Random variables over $\setX$ are typeset as uppercase, unitalicized letters, e.g., $\rvX$. Bold uppercase letters, e.g., $\rvXs$, are used to denote sets of random variables.
A \defn{graphical causal model} (GCM) consists of a directed acyclic graph (DAG) $\gcm = (\rvVars, \edges)$ whose nodes $\rvVars = \set{\rvVar_1, \ldots, \rvVar_N}$ are random variables over domains $\varDom_1,\ldots,\varDom_N$ and whose edges $\edges \subseteq \rvVars \times \rvVars$ encode direct causal influence. Each node carries a conditional distribution $\pmeas(\rvVar_n \mid \gcmpar_n)$ given its \defn{parent variables} $\gcmpar_n \subseteq \rvVars$ in $\gcm$.
The joint distribution then factorizes over the DAG as
\begin{equation}
    \pmeas(\rvVar_1, \ldots, \rvVar_N) = \prod_{n=1}^{N} \pmeas(\rvVar_n \mid \gcmpar_n).
\end{equation}
The graphical model allows one to reason about \defn{interventions}---changes to the joint distribution that affect individual variables in the model by modifying their direct causes.
Operationally, intervening on $\rvX$ replaces its conditional distribution $\pmeas(\rvX \mid \gcmpar_X)$ with a point mass at the intervened value, leaving all other conditionals untouched.
Standard notation for such operations is $\pmeas(\rvY\mid\dointervene(\rvX= \salgXevent))$, which denotes the distribution of $\rvY$ when $\rvX$ is set to event $\salgXevent$ \citep{pearl2016causal}.
For this to be well-defined, we ensure that the event $\salgXevent$ has a positive probability.
\footnote{However, we can easily generalize to the case where $\salgXevent$ has probability zero by defining each conditional $\pmeas(\rvVar_n \mid \gcmpar_n)$ as a \defn{Markov kernel}, a measure-theoretic surrogate for conditional probability that avoids conditioning on measure-zero events; see \cref{sec:markov-kernels}. In the body of the paper we treat these conditionals as conditional densities, which is only rigorous whenever the joint distribution is absolutely continuous with respect to a product reference measure (Lebesgue on continuous coordinates, counting on discrete ones).}

Importantly, the GCM formulation above allows us to handle intervention distributions in continuous spaces through covariate adjustment formulas as described in \citet[][Section 6.6]{peters2017elements}. Specifically, if $\rvZ$ denotes the parents of $\rvX$, we can solve for the causal effect of setting $\rvX$ to $\salgXevent$ using the adjustment integral:\looseness=-1
\begin{equation}
    \label{eq:integral}
    \pmeas(\rvY\mid \dointervene(\rvX=\salgXevent))=\int \pmeas(\rvY\mid \rvX=\salgXevent, \rvZ)\pmeas(\mathrm{d}\setZel).
\end{equation}
Intuitively, we average over the natural distribution of $\rvX$'s parents while substituting the intervened value $\salgXevent$ in the conditional.
Even if we cannot directly compute \cref{eq:integral}, we can approximate it via sampling. In the next section, we apply this framework to model interventions on sampled datasets.

\section{A Causal Model for Learnability} 
\label{sec:causality}

We now develop a causal framework to isolate the effects of dataset properties on language-model performance.
We instantiate the graphical causal model framework with a specific DAG, shown on the left in \cref{fig:distribution-equations}, defined over four random variables $\rvVars = \set{\rvAutomaton, \rvData, \rvLmodel, \rvMeasure}$. These variables correspond to our experimental pipeline and are described below.\looseness=-1
\begin{description}[leftmargin=3pt]
    \item[\pfsaAcr{}:] $\rvAutomaton$ is a random variable over the space of \pfsaAcr{} language models $\automataSet$, representing the generating process.
    \item[Dataset:] $\rvData$ is a random variable over the domain of datasets $\dataSet$, generated by the \pfsaAcr{}.
    \item[Learned Model:] $\rvLmodel$ is a random variable over the parameter space $\lmodelSet$ (e.g., neural weights), resulting from training.\looseness=-1
    \item[Performance:] $\rvMeasure$ is a random variable in $\measSet \subseteq \R$ that represents the fidelity of the learned model.%
\end{description}
The causal structure $\edges$ is defined by the dependencies between these stages: the dataset depends on the \pfsaAcr{} ($\rvAutomaton \to \rvData$), the learned model depends on the training data ($\rvData \to \rvLmodel$), and the performance measure depends on both the learned model and the \pfsaAcr{} ($\{\rvLmodel, \rvAutomaton\} \to \rvMeasure$).
This implies the following factorization of the joint distribution:
\begin{equation}\label{eq:factorization}
\!\!\!\pmeas(\rvAutomaton, \rvData, \rvLmodel, \rvMeasure) = \pmeas(\rvAutomaton) \pmeas(\rvData \mid \rvAutomaton) \pmeas(\rvLmodel \mid \rvData) \pmeas(\rvMeasure \mid \rvLmodel, \rvAutomaton).
\end{equation}

\begin{figure*}[t!]
    \centering

    \newcommand{\bvec}[1]{%
        \setlength{\arraycolsep}{1.5pt}%
        \ensuremath{\begin{pmatrix}#1\end{pmatrix}}%
    }

    \begin{subfigure}[b]{0.35\linewidth}
        \centering
        \begin{tikzpicture}[>={Latex[length=1.8mm,width=1.4mm]}, node distance=1.5cm, auto, baseline=(current bounding box.center)]
            \node[state, initial left, minimum size=9mm] (q0) at (0,0) {\small $q_0$};
            \node[state, minimum size=9mm] (q1) at (3,0) {\small $q_1$};
            \node[state, accepting, minimum size=9mm] (q2) at (1.5,-2.2) {\small $q_2$};

            \draw[->] (q0) edge[bend left=20] node[above] {\small $\symba/0.3$} (q1);
            \draw[->] (q1) edge[bend left=20] node[below] {\small $\symba/0.7$} (q0);
            \draw[->] (q0) -- node[below left] {\small $\symbb/0.7$} (q2);
            \draw[->] (q1) -- node[below right] {\small $\symbb/0.3$} (q2);
            \draw[->] (q2) edge[loop right] node {\small $\symbb/0.9$} (q2);
        \end{tikzpicture}
        \caption{Standard \pfsaAcr{} $\automaton$}
        \label{fig:pfsa}
    \end{subfigure}
    \hfill
    \begin{subfigure}[b]{0.35\linewidth}
        \centering
        \begin{tikzpicture}[>={Latex[length=1.8mm,width=1.4mm]}, node distance=1.5cm, auto, baseline=(current bounding box.center)]
            \node[state, initial left, minimum size=9mm] (bq0) at (0,0) {\small $q_0$};
            \node[state, minimum size=9mm] (bq1) at (3,0) {\small $q_1$};
            \node[state, accepting, minimum size=9mm] (bq2) at (1.5,-2.2) {\small $q_2$};

            \draw[->] (bq0) edge[bend left=20] node[above, fill=white, inner sep=1.5pt] {\tiny $\symba / (0, 0.3, \ldots)$} (bq1);
            \draw[->] (bq1) edge[bend left=20] node[below, fill=white, inner sep=1.5pt] {\tiny $\symba / (0, 0.7, \ldots)$} (bq0);
            \draw[->] (bq0) -- node[below left, fill=white, inner sep=1.5pt] {\tiny $\symbb / (0.7, 0, \cdots)$} (bq2);
            \draw[->] (bq1) -- node[below right, fill=white, inner sep=1.5pt] {\tiny $\symbb / (0.3, 0, \cdots)$} (bq2);
            \draw[->] (bq2) edge[loop right] node[fill=white, inner sep=1.5pt, xshift=3pt] {\tiny $\symbb / (0.9, 0, \cdots)$} (bq2);
        \end{tikzpicture}
        \caption{Binning Automaton $\liftedautomaton$}
        \label{fig:binning}
    \end{subfigure}
    \hfill
    \begin{subfigure}[b]{0.28\linewidth}
        \centering
        \small
        \begin{tikzpicture}[baseline=(current bounding box.center)]
            \node[align=left, font=\scriptsize, inner sep=0pt] {
                \textbf{Targets:} $\sT=\{\anedge{\cdot}{\symba}{\cdot}{\cdot}\}$\\[0.8em]
                \textbf{Standard Sequence Probability:}\\
                $P(\symba\symbb) = 0.3 \cdot 0.3 \cdot 0.1 = 0.009$\\
                $P(\symba\symba\symbb) = 0.3 \cdot 0.7 \cdot 0.7 \cdot 0.1 = 0.0147$\\[1.2em]
                \textbf{Binning Vector:}\\
                $\symba\symbb: (0, \mathbf{0.009}, 0, \ldots)$\\
                \textcolor{gray}{(Index 1: one $\symba$)}\\[0.8em]
                $\symba\symba\symbb: (0, 0, \mathbf{0.0147}, \ldots)$\\
                \textcolor{gray}{(Index 2: two $\symba$'s)}
            };
        \end{tikzpicture}
        \caption{Sequence probabilities}
        \label{fig:calcs}
    \end{subfigure}
    \caption{Given an automaton $\automaton$ (a), we lift its weights to the \emph{binning semiring}. The lifted \emph{binning automaton} (b) has vectors $\vv = (\fpsCfa_0, \fpsCfa_1, \ldots, \fpsCfa_{\semiringorder})$ as weights. Transitions on symbol $\symba \in \sT$ shift probability mass to the next index, while $\symbb \notin \sT$ stays in place. (c) Multiplication of these vectors (elements of the binning semiring) is defined such that it corresponds to counting occurrences.}
    \label{fig:binningex}
    \vspace{-5pt}
\end{figure*}

\paragraph{Intervening vs.\ Conditioning.}
We define a \emph{property} $\property\subseteq\dataset$ as a set of datasets.
In this paper, given a \pfsaAcr{} $\automaton$, we specify properties in terms of how often a designated subset of the automaton's transitions is traversed, e.g., ``the dataset was sampled by traversing a specific set of transitions exactly $\binningtarget$ times.'' This allows us to specify properties such as ``the dataset contains exactly $\binningtarget$ occurrences of symbol $\sym$'' as a special case. 
Specifically, given a property $\property$, our goal is to distinguish the following cases with our causal model.
\begin{enumerate}[label={(\arabic*)},leftmargin=15pt,itemindent=*]
    \item \textbf{Conditioning.} 
    Evaluate $\pmeas(\rvMeasure \mid \rvData\in\property)$, which introduces a dependence between the \pfsaAcr{} language model $\rvAutomaton$ and the dataset $\rvData$ (the term \textcolor{ETHBlue}{\(\pmeas(\automataEl \mid \rvData\in \property)\)} in \cref{fig:distribution-equations}).
    \pfsaAcr{}s that are more likely to produce datasets in $\property$ will be overrepresented.\looseness=-1
    \item \textbf{Intervening.} 
    Evaluate $\pmeas(\rvMeasure \mid \dointervene(\rvData\in\property))$ by modifying the data generation process to satisfy $\property$, preserving the original distribution \textcolor{ETHGreen}{\(\pmeas(\automataEl)\)}.
\end{enumerate}
To compare these two cases empirically, we (i) sample a set of \pfsaAcr{}s; (ii) sample corpora from each, either conditionally on $\rvData \in \property$ or under the intervention $\dointervene(\rvData \in \property)$; (iii) train a language model on each corpus; and (iv) measure its performance against the \pfsaAcr{}.
The expected performance under each setup gives the two estimands:\looseness=-1
\newcommand{\estimandInt}{\mymacro{\mu_{\mathrm{int}}}}
\newcommand{\estimandObs}{\mymacro{\mu_{\mathrm{obs}}}}
\newcommand{\estimandCaus}{\mymacro{\Delta_{\mathrm{causal}}}}
\begin{subequations}
\begin{align}
\estimandInt(\property)& \defeq\E[\rvMeasure \mid \dointervene(\rvData\in\property)], \\
\estimandObs(\property)& \defeq \E[\rvMeasure \mid \rvData\in\property].
\label{eq:estimands}
\end{align}
\end{subequations}
Thus, $\estimandInt(\property)-\estimandObs(\property)$ isolates the effect of enforcing $\property$ while preserving the original distribution of $\rvAutomaton$ (cf.\ \cref{fig:distribution-equations}); the two estimands are derived from the same end-to-end pipeline and differ only in how the corpus is sampled.\looseness=-1

\section{Sampling Under Event Constraints}
\label{sec:sampling}

To implement the causal intervention of \cref{sec:causality}, we need to sample from $\pmeas(\rvData \mid \rvAutomaton = \automataEl, \dointervene(\rvData \in \property))$ for a given automaton $\automataEl = \wfsa$, i.e., draw a dataset $\rvData$ from $\wfsa$ subject to a property $\property$.
For example, we may wish to draw a dataset that contains symbol $\symba$ exactly $\binningtarget$ times.
We first clarify what we mean by a property $\property$.
Each sampled string from $\wfsa$ is obtained by sampling an accepting path in the automaton, so we can relate each string structurally to the transitions it traverses.
In the case of a \dpfsaAcr{}, there is exactly one sequence of transitions for every string in the language.
Given $\binningtarget \in \N$ and a set of transitions $\sT$, we consider two intervention types that we use throughout \cref{sec:findings}: the \defn{exact-count} intervention fixes the corpus's total number of $\sT$-transitions to $\binningtarget$, and the \defn{at-least-once} intervention fixes the number of strings (out of $\numsamples$) that contain at least one $\sT$-transition.
In both cases, we write $\property \subseteq \dataset$ for the set of datasets meeting the chosen constraint, and the intervention enforces it.
\footnote{The framework also supports an \defn{at-least-$\binningtarget$} variant (total $\sT$-transitions $\geq \binningtarget$; \cref{theorem:occsamp-2}), but we only consider the exact and at-least-once in the case studies (\cref{sec:findings}).}
By controlling the occurrences of particular sets of transitions, we can target, e.g., all transitions that emit a specific symbol, or all transitions out of a given state.\looseness=-1

\subsection{The Binning Semiring}
\label{sec:binning-semiring}
To construct an efficient sampling algorithm, we define the \defn{binning semiring}, an algebra over a base semiring that tracks occurrence counts of a targeted feature through path computations.
For a base semiring $\semiringtuple$ and threshold count $\semiringorder\in\N$, the $\semiringorder$\textsuperscript{th}-order binning semiring $\binningsemiring = (\semiringset^{\semiringorder+1}, \binningplus, \binningtimes, \binningzero, \binningone)$ has as its underlying set the $(\semiringorder+1)$-tuples
\begin{equation} \label{eq:binningsemiring}
    \vv = (\fpsCfa_0, \fpsCfa_1, \ldots, \fpsCfa_{\semiringorder}), \qquad \fpsCfa_\binningindex \in \semiringset,
\end{equation}
where, for $\binningindex < \semiringorder$, element $\fpsCfa_\binningindex$ records the weight associated with $\binningindex$ target occurrences, and $\fpsCfa_\semiringorder$ aggregates all counts greater than or equal to $\semiringorder$.
Addition $\binningplus$ is element-wise, with $\binningzero \defeq (\basezero, \ldots, \basezero)$.
Multiplication $\binningtimes$ is an overflow-accumulating convolution: for $\vv, \bm{\fpsCfb} \in \semiringset^{\semiringorder+1}$,
\begin{align}
\small
    (\vv \binningtimes \bm{\fpsCfb})_\binningindex \defeq \begin{cases}
        \sum_{\otherbinningindex = 0}^{\binningindex} \fpsCfa_\otherbinningindex \basetimes \fpsCfb_{\binningindex - \otherbinningindex} & \ifcondition \binningindex < \semiringorder, \\
        \displaystyle\sum_{\substack{i + j \geq \semiringorder, \\ 0 \leq i, j \leq \semiringorder}} \fpsCfa_i \basetimes \fpsCfb_{j} & \ifcondition \binningindex = \semiringorder,
    \end{cases} \label{eq:binning-semiring-convolution}
\end{align}
with $\binningone \defeq (\baseone, \basezero, \ldots, \basezero)$.
Below $\semiringorder$, $\binningtimes$ is the ordinary convolution that adds occurrence counts; at the top coordinate it absorbs all overflow, so any product with combined count $\geq \semiringorder$ accumulates there.\footnote{Equivalently, $\binningsemiring$ is the monoid semialgebra over $\semiringset$ generated by the cyclic monoid $\set{c^0, c^1, \ldots, c^\semiringorder}$ with relation $c^\semiringorder \cdot c = c^\semiringorder$; under this presentation an element corresponds to a formal polynomial $\sum_{r=0}^{\semiringorder} \fpsCfa_r c^r$, and $\binningtimes$ is polynomial multiplication modulo the relation.
We discuss this alternative algebraic perspective in \cref{app:monoid-semialgebras}.\looseness=-1
}
For $\vv \in \semiringset^{\semiringorder+1}$ and $n\in\N$, we write $\binningexp{\vv}{n}\defeq \underbrace{\vv \binningtimes \cdots \binningtimes \vv}_{n\text{ times}}$. Following convention, we sometimes write $\binningsemiring$ to mean $\semiringset^{\semiringorder+1}$.

\paragraph{Time complexity.}
Assuming $\baseplus$ and $\basetimes$ run in constant time, $\binningplus$ runs in time $\bigO(\semiringorder)$.
The $\binningtimes$ operation (a convolution) runs in time $\bigO(\semiringorder^2)$.
In the special case of real-weighted WFAs, however, we can implement $\binningtimes$ using the fast Fourier transform (FFT), in which case it runs in $\bigO(\semiringorder \log \semiringorder)$ time. Despite the faster implementation of $\binningtimes$ with FFT, we do not adopt it due to underflow for large values of $\semiringorder$. Instead, we use the log semiring for numerical stability. In practice, we implement these operators using vectorized operations in PyTorch \citep{paszke-etal-2019-pytorch}, allowing us to run them efficiently on GPUs.\looseness=-1

\subsection{Binning Automaton}
We now describe how to \defn{lift} a \pfsaAcr{} $\wfsa$ to a \wfsaAcr{} over the binning semiring as shown in \cref{fig:binningex}.
    Let $\automaton = (\states, \alphabet, \trans, \initf, \finalf)$ be a \wfsaAcr{} and $\semiringorder \in \N$.
    An \defn{event function} is a tuple $\featurefunc = (\featurefunc_\initf, \featurefunc_\trans, \featurefunc_\finalf)$, where $\featurefunc_\trans \colon \states \times \alphabet \times \states \to \nrangez{\semiringorder}$ and $\featurefunc_\initf, \featurefunc_\finalf \colon \states \to \nrangez{\semiringorder}$.
    For $\featurefunc_f$ and $\binningindex \in \nrangez{\semiringorder}$, we define the \defn{lifting function} $\liftfunc_f \colon \mathrm{dom}(f) \to \binningsemiring$
    \begin{equation}
        \liftfunc_f(x)_{\binningindex} \defeq \begin{cases}
            f(x) & \ifcondition \binningindex = \featurefunc_f(x) \\
            \zero & \otherwisecondition.
        \end{cases}
    \end{equation}
Informally, the lifting function takes a weight in the original \wfsaAcr{} and inserts it into the $\featurefunc_f(x)\textsuperscript{th}$ position of the lifted weight in $\binningsemiring$; all other elements are $\zero$.\looseness=-1
\begin{definition}[Binning Automaton]
    \label[definition]{def:countingautomaton}
    Let $\automaton = (\states, \alphabet, \trans, \initf, \finalf)$ be a \wfsaAcr{} and $\featurefunc = (\featurefunc_\initf, \featurefunc_\trans, \featurefunc_\finalf)$ an event function on $\automaton$. The \defn{binning automaton} of $\automaton$ over the $\semiringorder$\textsuperscript{th}-order binning semiring with respect to $\semiringset$ is the \wfsaAcr{} $\liftedautomaton = (\states, \alphabet, \trans_\featurefunc, \initf_\featurefunc, \finalf_\featurefunc)$, where
    \begin{align}
        \trans_\featurefunc(\stateq, \sym, \stateq') &\defeq \liftfunc_{\featurefunc_\trans}\bigl(\trans(\stateq, \sym, \stateq')\bigr), \\
        \initf_\featurefunc(\stateq) &\defeq \liftfunc_{\featurefunc_\initf}\bigl(\initf(\stateq)\bigr), \\
        \finalf_\featurefunc(\stateq) &\defeq \liftfunc_{\featurefunc_\finalf}\bigl(\finalf(\stateq)\bigr).
    \end{align}
\end{definition}

We can thus count the number of times an event $\featurefunc$ occurs on a path $\apath = \anedge{q_0}{a_1}{\weight_1}{} \cdots \anedge{}{a_{\pathlength}}{\weight_{\pathlength}}{q_{\pathlength}}$ with
\begin{equation}
    |\apath|_\featurefunc \defeq \featurefunc_\initf(q_0) + \sum_{i=1}^{\pathlength} \featurefunc_\trans(q_{i - 1}, a_i, q_i) + \featurefunc_\finalf(q_{\pathlength}).
\end{equation}
Similarly, for a multiset of paths $\sP = \{\apath_k\}_{k=1}^{\numsamples}$, we write $|\sP|_\featurefunc \defeq \sum_{k=1}^{\numsamples} |\apath_k|_\featurefunc$.
For path weights in $\liftedautomaton$, we use $\pathfeaturesweight{\featurefunc}{\apath} \in \binningsemiring$ to denote the path weight function for $\apath \in \paths(\liftedautomaton)$. The next two results interpret the lifted weights.\looseness=-1

\begin{restatable}[Path Weight Interpretation]{reTheorem}{pathWeightInterpretationTheorem}
    \label{lemma:interp}
  Let $\wfsa$ be a WFSA over a semiring $\semiringtuple$, $\semiringorder \in \N$, $\featurefunc = (\featurefunc_\initf, \featurefunc_\trans, \featurefunc_\finalf)$ an event function on $\wfsa$, and $\liftedautomaton$ the corresponding binning automaton over $\binningsemiring$ (\cref{def:countingautomaton}). For any path $\apath \in \paths(\liftedautomaton)$, let $\baseversion{\apath} \in \paths(\wfsa)$ denote its base-automaton counterpart and $\pathweight{\baseversion{\apath}} \in \semiringset$ its weight in $\wfsa$. Then for any $\binningindex \in \nrangez{\semiringorder}$,
    \begin{equation} \label{eq:path-weight-bin}
        \pathfeaturesweight{\featurefunc}{\apath}_{\binningindex} = \begin{cases}
            \pathweight{\baseversion{\apath}} & \ifcondition \binningindex < \semiringorder \text{ and } |\apath|_\featurefunc = \binningindex \\
            \pathweight{\baseversion{\apath}} & \ifcondition \binningindex = \semiringorder \text{ and } |\apath|_\featurefunc \geq \semiringorder \\
            \zero & \otherwisecondition.
        \end{cases}
    \end{equation}
\end{restatable}
\begin{proof}
    See \cref{sec:additional-proofs}.
\end{proof}
That is, if $|\apath|_\featurefunc < \semiringorder$, the lifted weight places the base weight $\pathweight{\baseversion{\apath}}$ at position $|\apath|_\featurefunc$, with $\zero$ elsewhere. If $|\apath|_\featurefunc \geq \semiringorder$, it places it at position $\semiringorder$.
For the remainder of this section we take $\wfsa$ to be a \pfsaAcr{}, i.e., the base semiring is the non-negative reals. Next, we formalize the intuition that summing path weights results in a vector that encodes the probabilities of sampling specific numbers of occurrences.
To formalize this, let $\rvPath_\wfsa(\stateq)$ be a random variable over paths starting from $\stateq$ in the \pfsaAcr{} $\wfsa$, given by
\begin{equation} \label{eq:path-prob-def}
\begin{split}
\pmeas(\rvPath_\wfsa(\stateq) &= \anedge{\stateq}{a_1}{\weight_1}{} \cdots \anedge{}{a_M}{\weight_M}{\stateq_M}) \\
& \defeq \left(\prod_{m=1}^M \weight_m \right) \finalf(\stateq_M).
\end{split}
\end{equation}
\begin{restatable}[Backward Weight Interpretation]{reTheorem}{allsumInterpretationTheorem}
    \label{theorem:crux}
    Under the conditions of \cref{lemma:interp}, let $\rvPath_\wfsa(\stateq)$ denote a random path sampled from $\wfsa$ starting at state $\stateq$, and let $\backwardweight{\featurefunc} \defeq \backwardweight{\liftedautomaton} \colon \states \to \binningsemiring$ denote the backward-weight function (\cref{eq:backwardweight-def}) of $\liftedautomaton$. For any $\stateq \in \states$ and $\binningindex \in \nrangez{\semiringorder}$,
    \begin{equation}\label{eq:backward-weight}
        \backwardweight{\featurefunc}(\stateq)_{\binningindex} =
        \begin{cases}
            \pmeas(|\rvPath_\wfsa(\stateq)|_\featurefunc = \binningindex) & \ifcondition \binningindex < \semiringorder \\
            \pmeas(|\rvPath_\wfsa(\stateq)|_\featurefunc \geq \semiringorder) & \ifcondition \binningindex = \semiringorder.
        \end{cases}
    \end{equation}
\end{restatable}
\begin{proof}
    See \cref{sec:additional-proofs}.
\end{proof}
That is, summing lifted path weights from $\stateq$ yields the distribution over target counts---exactly the conditional we need to drive the sampler.
Computing backward weights in a general \pfsaAcr{} with cycles requires asteration, i.e., summation over the weights of infinitely many paths.
We derive the closed-form solution for asteration in the binning semiring in \cref{app:closedformstar}.
The backward weights defined in \cref{eq:backward-weight} are the building block of the corpus sampler we develop next.

\subsection{Constrained Sampling}
\label{sec:constrained-sampling}

With the backward weights $\backwardweight{\featurefunc}$ (\cref{theorem:crux}) in hand, we sample a corpus $\sP = \{\apath_k\}_{k=1}^{\numsamples}$ of $\numsamples$ paths from $\wfsa$ subject to the corpus-level constraint $|\sP|_\featurefunc = \binningtarget$. The natural decomposition is two-phase: first sample the per-string counts $(\binningindex_1, \ldots, \binningindex_\numsamples)$ with $\sum_{k=1}^{\numsamples} \binningindex_k = \binningtarget$, then sample each path with its target count. The two theorems below give the count distributions; \samplepathfn (\cref{alg:samplepath}) and \samplecorpusfn (\cref{alg:samplecorpus}) use them.

\begin{algorithm}[t]
\begin{minted}[escapeinside=@@, fontsize=\small, linenos, xleftmargin=1.6em, numbersep=10pt]{python}
def @{\bfseries\color{pyFuncName}Preprocess}@(@$\wfsa$@, @$\featurefunc$@, @$\binningtarget$@):
  # Lift to @$\liftedautomaton$@ over @$\binningsemiring$@, @$\semiringorder = \binningtarget + 1$@ (Def.@\ref{def:countingautomaton}@)
  @$\liftedautomaton \gets$@ lift(@$\wfsa$@, @$\featurefunc$@, @$\binningsemiring$@, @$\binningtarget + 1$@)
  @$\backwardweight{\featurefunc} \gets$@ backward_weights(@$\liftedautomaton$@)  # @\cref{app:closedformstar}@
  @$\outarcs(\stateq) \gets \{\anedge{\stateq}{a}{w_e}{\stateq^\prime} \in \trans\}$@ for @$\stateq \in \states$@
  # Precompute per-action binning weights from @$\liftedautomaton$@
  for @$\stateq \in \states$@:
    @$\bm{W}(\stateq, \eos) \gets \finalf_\featurefunc(\stateq)$@   # lifted accept weight
    for @$\anedge{\stateq}{a}{w_e}{\stateq^\prime} \in \outarcs(\stateq)$@:
      @$\bm{W}(\stateq, a) \gets \trans_\featurefunc(\stateq, a, \stateq^\prime) \binningtimes \backwardweight{\featurefunc}(\stateq^\prime)$@
  return @$(\backwardweight{\featurefunc}, \outarcs, \bm{W})$@
\end{minted}
\caption{\samplepreprocessfn: lifts $\wfsa$ over $\binningsemiring$, computes the backward weights $\backwardweight{\featurefunc}$, stores each state's outgoing transitions in $\outarcs$, and precomputes the per-action binning-weight table $\bm{W}$ used by \samplepathfn.}\label{alg:preprocess}
\end{algorithm}

\begin{algorithm}[t]
\begin{minted}[escapeinside=@@, fontsize=\small, linenos, xleftmargin=1.6em, numbersep=10pt]{python}
def @{\bfseries\color{pyFuncName}SamplePath}@(@$\wfsa$@, @$\featurefunc$@, @$\bm{W}$@, @$\binningindex$@, @$\outarcs$@):
  @$\stateq, j, \apath \gets \qinit, 0, ()$@  # state, seen occ., path
  while True:
    # weights over @$\outarcs(\stateq) \cup \{\eos\}$@ at budget @$\binningindex - j$@
    @$\vw \gets \bm{W}(\stateq, \cdot)_{\binningindex - j}$@
    sample @$\anedge{\stateq}{a}{w_e}{\stateq^\prime}$@ with @$\Pr(\cdot) \propto \vw$@
    if @$a = \eos$@: return @$\apath$@
    @$\apath$@.append(@$\anedge{\stateq}{a}{w_e}{\stateq^\prime}$@); @$\stateq, j \gets \stateq^\prime, j + \featurefunc_\trans(\stateq, a, \stateq^\prime)$@
\end{minted}
\caption{\samplepathfn: draws a single path from $\wfsa$ with exactly $\binningindex$ target occurrences via per-step conditional sampling, looking up the preprocessed weight table $\bm{W}$ at the remaining budget $\binningindex - j$ at each step.}\label{alg:samplepath}
\end{algorithm}

\begin{algorithm}[t]
\begin{minted}[escapeinside=@@, fontsize=\small, linenos, xleftmargin=1.6em, numbersep=10pt]{python}
def @{\bfseries\color{pyFuncName}SampleCorpus}@(@$\wfsa$@, @$\featurefunc$@, @$\binningtarget$@, @$\numsamples$@):
  @$\backwardweight{\featurefunc}, \outarcs, \bm{W} \gets$@ @{\bfseries\color{pyFuncName}Preprocess}@(@$\wfsa$@, @$\featurefunc$@, @$\binningtarget$@)
  @$\psum, m \gets \backwardweight{\featurefunc}(\qinit), 0$@
  # Phase 1: per-string counts (@\cref{theorem:occsamp}@)
  for @$k$@ in 1..@$\numsamples$@:
    sample @$\binningindex_k$@ with @$\Pr(\binningindex_k{=}i) \propto \psum_i (\binningexp{\psum}{\numsamples - k})_{\binningtarget - m - i}$@
    @$m \gets m + \binningindex_k$@
  # Phase 2: paths conditioned on counts
  for @$k$@ in 1..@$\numsamples$@:
    @$\apath_k \gets$@ @{\bfseries\color{pyFuncName}SamplePath}@(@$\wfsa$@, @$\featurefunc$@, @$\bm{W}$@, @$\binningindex_k$@, @$\outarcs$@)
  return @$\{\apath_k\}_{k=1}^{\numsamples}$@
\end{minted}
\caption{\samplecorpusfn: draws a corpus of $\numsamples$ paths with corpus-level target count $\binningtarget$ in two phases (per-string counts, then paths).}\label{alg:samplecorpus}
\end{algorithm}

\begin{restatable}[Counting in a Set]{reTheorem}{probabilitiesTheorem}
    \label{lemma:probset}
    Under the conditions of \cref{theorem:crux}, let $\sP = \set{\apath_k}_{k=1}^{\numsamples}$ be a corpus of $\numsamples$ paths sampled i.i.d.\ from $\rvPath_\wfsa(\qinit)$, and let $\psum \defeq \backwardweight{\featurefunc}(\qinit) = \bigoplus_{\apath \in \paths(\liftedautomaton)} \pathfeaturesweight{\featurefunc}{\apath}$ be the sum of all path weights in $\liftedautomaton$. For any $\binningindex \in \nrangez{\semiringorder}$, we have
    \begin{equation}
        (\binningexp{\psum}{\numsamples})_{\binningindex} =
        \begin{cases}
            \pmeas(|\sP|_\featurefunc = \binningindex) & \ifcondition \binningindex < \semiringorder \\
            \pmeas(|\sP|_\featurefunc \geq \semiringorder) & \ifcondition \binningindex = \semiringorder.
        \end{cases}
    \end{equation}
\end{restatable}
\begin{proof}
    See \cref{sec:additional-proofs}.
\end{proof}

\begin{restatable}[Sampling Path Event Counts]{reTheorem}{samplingLengthsTheorem}
    \label{theorem:occsamp}
    Under the conditions of \cref{lemma:probset} (a corpus $\sP$ of $\numsamples$ i.i.d.\ paths from $\rvPath_\wfsa(\qinit)$), let $\binningtarget \in \N$ be a target total count and set the binning semiring order to $\semiringorder = \binningtarget + 1$. For any $k \in \nrange{\numsamples}$, $m \in \nrangez{\binningtarget}$, and $\binningindex \in  \nrangez{\binningtarget - m}$,
    \begin{equation}
    \begin{split}
    &\pmeas(|\apath_k|_\featurefunc=\binningindex \,\bigm|\, |\sP|_\featurefunc = \binningtarget, |\{\apath_j\}_{j < k}|_\featurefunc = m) \\
    &\quad = \frac{(\binningexp{\psum}{\numsamples - k})_{\binningtarget-m-\binningindex}}{(\binningexp{\psum}{\numsamples - k + 1})_{\binningtarget - m}} \psum_\binningindex. 
    \end{split}
    \label{eq:path-events-counts-1}
    \end{equation}
\end{restatable}
\begin{proof}
    See \cref{sec:additional-proofs}.
\end{proof}
Two analogous results in \cref{sec:additional-proofs} cover related sampling variants. \cref{theorem:occsamp-2} relaxes the corpus-level constraint from $|\sP|_\featurefunc = \binningtarget$ to $|\sP|_\featurefunc \geq \binningtarget$, returning the probability that path $\apath_k$ contributes at least $\binningindex$ occurrences instead of exactly $\binningindex$. \cref{theorem:occsamp-atleastonce} replaces the raw count by its per-path indicator, so the constraint becomes the \emph{at-least-once} intervention.

\paragraph{Constructing an efficient sampler.} 
The algorithm \samplecorpusfn first applies \cref{theorem:occsamp} to sample the per-string counts $(\binningindex_1, \ldots, \binningindex_\numsamples)$. Then, for each $k$ it calls \samplepathfn to draw a path with exactly $\binningindex_k$ target occurrences.
\samplepathfn walks the original automaton state by state, at each state sampling an outgoing transition or acceptance according to the probabilities conditioned on the number of occurrences remaining. We assume $\featurefunc_\initf(\qinit) = 0$ throughout, i.e., interventions never occur on the initial-weight component.
For any per-path target $\binningindex$, \samplepathfn returns a path distributed as $\pmeas(\rvPath_\wfsa(\qinit) \mid |\rvPath_\wfsa(\qinit)|_\featurefunc = \binningindex)$ as shown in the following theorem.

\begin{restatable}[Constrained  Sampling]{reTheorem}{samplingStringTheorem} \label{cor:symsamp}
   Under the conditions of \cref{theorem:occsamp}, fix $\binningindex \in \nrangez{\binningtarget}$ and define a lifted \pfsaAcr{} $\samplingWfsa$ over the state set $\states \times \nrangez{\binningindex}$ (see \cref{sec:additional-proofs}).
   Let $\rvPath_{\samplingWfsa}(\qinit)$ be a random path drawn from $\samplingWfsa$.
   For any $\apath \in \paths(\samplingWfsa)$,
    \begin{equation*}
    \pmeas(\rvPath_\wfsa(\qinit) = \baseversion{\apath} \,\bigm|\, |\rvPath_\wfsa(\qinit)|_\featurefunc = \binningindex) = \pmeas(\rvPath_{\samplingWfsa}(\qinit) = \apath).
    \end{equation*}
\end{restatable}
\begin{proof}
    See \cref{sec:additional-proofs}.
\end{proof}
The \emph{at-least} version (\cref{cor:symsamp-2} in \cref{sec:additional-proofs}) samples a path conditioned on $|\rvPath_\wfsa(\qinit)|_\featurefunc \geq \binningindex$ instead of $= \binningindex$.
\cref{appendix:runtime} derives runtime bounds for all three algorithms.
Let $\numsamples$ be the number of strings, $\binningtarget$ the target count, and $\ell$ a bound on the sampled path length. \samplepreprocessfn runs in $\bigO(\nstates^3 \binningtarget^2)$ time, dominated by Lehmann's algorithm \citep{LEHMANN197759} on the binning-lifted automaton; each \samplepathfn call runs in $\bigO(\ell)$ with alias-table sampling \citep{walker1977,vose1991}; and the full \samplecorpusfn pipeline (one \samplepreprocessfn, $\numsamples$ \samplepathfn calls, and the per-string count sampling) in $\bigO((\nstates^3 + \numsamples)\binningtarget^2 + \numsamples \ell)$, or $\bigO((\nstates^3 + \numsamples)\binningtarget \log \binningtarget + \numsamples \ell)$ when the binning multiplication is computed with the FFT.

\paragraph{Rejection sampling.} An alternative method for sampling under the property constraint is to draw a corpus and accept if it matches $\property$. It has an acceptance probability that decays exponentially in the gap between $\binningtarget$ and the natural expected count of $\sT$, so it is impractical for most events of interest; \cref{app:naive} discusses naïve baselines in more detail.

\section{Case Studies in Three Settings}
\label{sec:findings}

We now compute and compare the correlational and causal estimates (\cref{eq:estimands}) across three case studies: (i) the \parity + star-free automaton of \cref{fig:simple_machine} with resampled weights (\cref{fig:simple_machine_plots}); (ii) a large random sample of 50-state \pfsaAcr{}s (\cref{fig:mce-large}); and (iii) a fixed 40-state topology with resampled weights (\cref{fig:fixtop}). \looseness=-1

\paragraph{Sampling \pfsaAcr{}s.}
Case studies (ii) and (iii) sample each \pfsaAcr{} of fixed size in two stages.
\emph{Topology:} for each source state $\stateq \in \states$, we include each symbol $\sym \in \alphabet$ independently with probability $\tfrac{1}{2}$ and assign each included symbol to a uniformly random target state $\stateq^\prime \in \states$. Each state is made accepting (nonzero final weight) independently with probability $\tfrac{3}{10}$.
\emph{Weights:} the outgoing transition weights at each state are drawn from a Dirichlet distribution. At accept states, the final weight is pinned at $\tfrac{3}{10}$ and the transition weights are rescaled to sum to $\tfrac{7}{10}$; at non-accept states the transition weights sum to $1$. Full sampling details are given  in \cref{appendix:experimental-details}.\looseness=-1

\paragraph{Training.}
Because we have access to the \pfsaAcr{} that generated the data, each position supplies the full target distribution rather than a single sampled symbol.
We therefore train against the summed per-position KL between the automaton's and the model's next-symbol distributions to get a more data-efficient signal than next-symbol cross-entropy:
\begin{equation} \label{eq:training-loss}
\begin{aligned}
    \mathcal{L}(\str) &= \sum_{t=1}^{|\str|+1} \KL\bigl(\pautomaton(\cdot \mid \str_{<t}) \,\|\, \pneural(\cdot \mid \str_{<t})\bigr) \\
    &= \sum_{t=1}^{|\str|+1} \sum_{\sym \in \alphabeteos} \pautomaton(\sym \mid \str_{<t}) \log \frac{\pautomaton(\sym \mid \str_{<t})}{\pneural(\sym \mid \str_{<t})},
\end{aligned}
\end{equation}
where the sum runs to $|\str|+1$ to account for the full sequence along with termination.
Across all three case studies we train both LSTM \citep{hochreiter1997long} and transformer \citep{vaswanitransf} architectures; full hyperparameters are in \cref{appendix:lmhyperparams}.\looseness=-1

\paragraph{Decomposed Kullback--Leibler Divergence.}
Each intervention targets a specific set of transitions (those leaving a chosen state, emitting a chosen symbol, or a single transition), so the measure of model fit must be localized to those transitions as well.
However, the Kullback--Leibler (KL) divergence $\KL(\pautomaton \,\|\, \pneural)$ averages over \emph{all} transitions and states, masking localized performance: a model that misses the targeted transitions can still look close overall by fitting well elsewhere.
We therefore use a \defn{decomposed KL divergence} that splits $\KL(\pautomaton \,\|\, \pneural)$ into per-state, per-transition, or per-symbol contributions, with only the targeted-transition term playing the role of $\rvMeasure$ in the causal diagram (\cref{sec:causality}).
Let $\pautomaton$ be the LM defined by a \dpfsaAcr{} $\automaton = (\states, \alphabet, \trans, \initf, \finalf)$, and let $\pneural$ be the trained LM.
Since $\automaton$ is deterministic, the target state $\stateq^\prime$ is unique given $(\stateq, \sym)$ whenever $\anedge{\stateq}{\sym}{\weight}{\stateq^\prime} \in \trans$; we then write $\pautomaton(\sym \mid \stateq) = \weight$. Termination is governed by $\pautomaton(\eos \mid \stateq) = \finalf(\stateq)$, and $\pautomaton(\cdot \mid \stateq)$ is a distribution over $\alphabeteos$ at every state. We define
\begin{subequations}
\begin{align}
\pprefix(\stateq) &\defeq \sum_{\klstr \in \kleene{\alphabet}} \pPrefix(\klstr) \ind{\extendedtrans(\klstr) = \stateq}, \\
\pprefix(\stateq,\sym) &\defeq \pprefix(\stateq)\,\pautomaton(\sym \mid \stateq), \\
\pprefix(\sym) &\defeq \sum_{\stateq \in \states} \pprefix(\stateq, \sym).
\end{align}
\end{subequations}
The decompositions are as follows (derived in \cref{appendix:decomposedkl}):
\begin{subequations}
\label{eq:kl-decompositions}
\small
\begin{align}
&\KL(\pautomaton \,\|\, \pneural)\\
&=
\sum_{\stateq\in\states}
\pprefix(\stateq) \underbrace{\sum_{\substack{\klstr \in \kleene{\alphabet}, \\ \extendedtrans(\klstr) = \stateq}} \frac{\pPrefix(\klstr)}{\pprefix(\stateq)} \KL\bigl(\pautomaton(\cdot \mid \stateq)\,\|\,\pneural(\cdot \mid \klstr)\bigr)}_{\text{per-state contribution}}
\label{eq:kl-state-main} \\
&= \sum_{\substack{\stateq \in \states, \\ \sym \in \alphabeteos }} \pprefix(\stateq, \sym) \underbrace{\sum_{\substack{\klstr \in \kleene{\alphabet}, \\ \extendedtrans(\klstr) = \stateq}} \frac{\pPrefix(\klstr)}{\pprefix(\stateq)} \log\frac{\pautomaton(\sym \mid \stateq)}{\pneural(\sym \mid \klstr)}}_{\text{per-transition contribution}}
\label{eq:kl-transition-main} \\
&= \sum_{\sym \in \alphabeteos} \pprefix(\sym) \underbrace{\sum_{\klstr \in \kleene{\alphabet}} \frac{\pPrefix(\klstr)\,\pautomaton(\sym \mid \extendedtrans(\klstr))}{\pprefix(\sym)} \log\frac{\pautomaton(\sym \mid \extendedtrans(\klstr))}{\pneural(\sym \mid \klstr)}}_{\text{per-symbol contribution}}
\label{eq:kl-symbol-main}
\end{align}
\end{subequations}

\paragraph{\parity + Star-Free Automaton.}
We fix the topology of \cref{fig:simple_machine} and take $\sT$ to be the transitions leaving a chosen state, sweeping the choice over $\states$ to read off the per-state decomposed KL. On it we sample 200 \pfsaAcr{}s by drawing the outgoing transition weights at each state from a Dirichlet distribution. At the two accept states (\oddState\ and \freeState\ in \cref{fig:simple_machine}) the final (accept) weight is pinned at $\tfrac{1}{10}$ and the transition weights are rescaled to sum to $\tfrac{9}{10}$; at the other states the transition weights sum to $1$. We use 80 of the 200 \pfsaAcr{}s for the causal intervention (a single causal intervention yields many datapoints per \pfsaAcr{}, while a correlational draw yields one per \pfsaAcr{}) and all 200 for the correlational baseline. For each (\pfsaAcr{}, intervention setting) pair we draw a 500-string corpus and train 10 LSTMs and 10 transformers; per architecture we report the run with the lowest total $\KL(\pautomaton \| \pneural)$, so that the architecture comparison is not confounded by training-run variance.
We run both intervention types of \cref{sec:sampling} on this automaton, with exact-count results in \cref{fig:simple_machine_plots} and at-least-once results in \cref{fig:simple_machine_plots_alo}.
Both show a clear causal--correlational gap, while the shapes differ.\looseness=-1

\paragraph{Aggregating Over Topologies.}
We sample 952 random topologies of 50 states and 10 symbols as described above. On 97 of them we apply both the exact-count and at-least-once interventions, sampling 10 weight configurations per topology. We draw a 500-string corpus per configuration for exact-count and a 2000-string corpus for at-least-once, and train one LSTM and one transformer on each. We compare two choices of $\sT$: symbol-level, targeting transitions that emit a chosen symbol, and state-level, targeting transitions leaving a chosen state. Results for both properties appear in \cref{fig:mce-large}.
Under both properties, the causal curves differ from the correlational baseline. At lower occurrence counts, the two curves also run in opposite directions: the causal symbol-level KL decreases as the target count grows, whereas the correlational KL first rises to a peak near 250 occurrences before falling. This reflects confounding: unconstrained sampling rarely yields a corpus with so few target occurrences, and the automata that do populate the low-count bins are easier to learn.\looseness=-1

\begin{figure}[t]
    \centering
    \includegraphics[width=\linewidth]{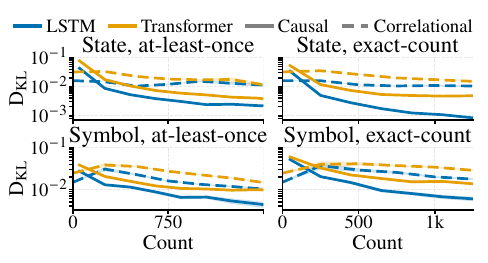}
    \caption{Monte Carlo estimates of the relation between learnability and the number of target occurrences over automata with 50 states and 10 symbols, for both state- and symbol-level interventions. Shaded areas represent one standard error.}
    \label{fig:mce-large}
\end{figure}

\paragraph{Only Varying Weights.}
We draw a single topology of 40 states and 10 symbols using the procedure above (so the topology is not hand-designed) and freeze it for the rest of the experiment. As in \cref{fig:simple_machine_plots}, we use a state-level property. On top of the frozen topology we sample 400 weight configurations for the correlational baseline and 10 weight configurations for the at-least-once intervention, sampling a 2000-string corpus per configuration. For each configuration we train five LSTMs and five transformers, visualising eight states in \cref{fig:fixtop}.
Two findings extend the \parity + star-free experiment (\cref{fig:simple_machine}). 
First, a causal--correlational gap is visible at most states (note the log scale), and at low counts, it is approaching, or above, an order of magnitude at states 2, 6, and 8; this rules out the explanation that confounding is an artifact of a hand-designed toy task.
Second, both the per-state learnability and the shape of the gap vary sharply within the same topology.
{State 3 has the highest LSTM end-point KL and is among the hardest for the Transformer; state 1 is the hardest for the Transformer; state 9 is the easiest for the LSTM. The gap itself takes different shapes: at state 2 (both architectures) and state 8 (Transformer), the causal curve descends sharply while the correlational curve stays roughly flat or even rises; at state 4, the two curves run nearly parallel throughout; at state 3, the causal curve descends to converge with a flat correlational baseline at the right edge of the correlational range. These are state-level variations within the same topology, making confounding a state-level rather than topology-level phenomenon.\looseness=-1

\begin{figure}[t]
    \centering
    \includegraphics[width=\linewidth]{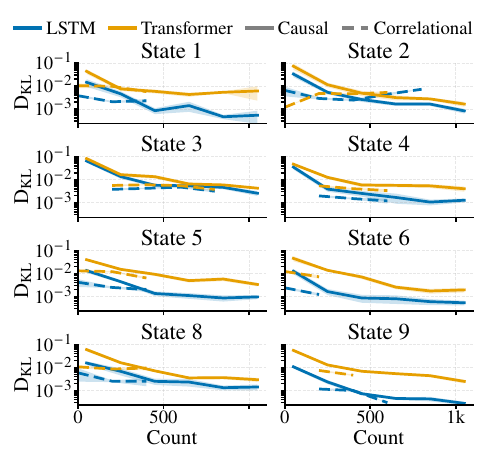}
    \caption{MCE of the Decomposed KL at randomly chosen states in a sampled automaton, with varying weight configurations. We see how correlational results can lead to misleading assumptions. Shaded areas represent one standard error.}
    \label{fig:fixtop}
\end{figure}

\section{Conclusion}
We have made the case that language models trained on formal languages should be thought of as multi-task learners, much like modern LMs. This means inter-task confounders must be controlled when studying how an architecture learns a \emph{given} task. We present the tools needed to do so---the binning semiring and the sampling algorithms based on it. Our case studies support this story: removing the confounders changes the learnability curves. For example, the non-monotonic correlational trend at low occurrence counts in \cref{fig:mce-large} disappears under causal sampling. Collectively, the apparent effect of a property on learnability depends on the sampling procedure: correlational filtering of unintervened corpora confounds the property with the structural features it co-varies with, so the curves can reflect the sampler rather than the property itself. This matters for any study comparing architectures, characterizing scaling behavior, or attributing difficulty to specific structural features; we hope the methodology developed here offers a useful tool for such work. The framework extends naturally to richer automata (e.g., pushdown automata for context-free languages). Limitations are given in \cref{sec:limitations}.\looseness=-1

\section*{Impact Statement}
This paper proposes a causal alternative to correlational evaluation of language-model learnability on formal tasks. Its impact is mediated through research practice: adopting these methods would change how learnability claims are reported and may require revisiting prior correlational findings about how language properties affect learnability.

\section*{Acknowledgements}
VS and JV were supported by the Pioneer Centre for AI, DNRF grant number P1.

\bibliography{example_paper}
\bibliographystyle{acl_natbib}

\makeatletter
\let\@currlist\@empty
\makeatother
\clearpage
\appendix
\crefalias{section}{appendix}
\crefalias{subsection}{appendix}
\crefalias{subsubsection}{appendix}
\onecolumn

\let\addcontentsline\realaddcontentsline
\etocdepthtag.toc{appendix}
\etocsettagdepth{mainmatter}{none}
\etocsettagdepth{appendix}{subsection}
\etocsetnexttocdepth{subsection}
\etocsettocstyle
  {\section*{Contents of the Appendix}}
  {}
\tableofcontents

\newpage

\section{Limitations}
\label{sec:limitations}
We focus on \pfsaAcr{}-induced settings where properties are exactly countable; applying the framework to natural language would require approximate labels for target phenomena. The computational cost of our sampling algorithm grows with the size of the automaton (closed-form bounds in \cref{appendix:runtime}). Each architecture is trained at a single fixed size, without scaling or hyperparameter sweeps. We note that our experiments and results only concern the causal procedure as such, including the controlled sampling and evaluation. We are not claiming to present a thorough empirical study of any architecture or automaton; the choice of LSTM and transformer architectures, and of \parity-based automata, is illustrative; the framework extends to other architectures and tasks without modification.

\section{Related Work}
Much work has used formal languages to probe neural networks \citep[][\textit{inter alia}]{cleeremans1989finite,jacobsson2005rule,valvoda-etal-2022-benchmarking,svete-etal-2024-lower,borenstein-etal-2024-languages}.
A central empirical tool is synthetic data: SCAN-style setups examine compositional generalization \citep{pmlr-v80-lake18a,bastings2018jump,NEURIPS2020_e5a90182}, and $k$-Dyck languages probe nested structure \citep{weiss-etal-2018-practical,suzgun-etal-2019-lstm,bhattamishra-etal-2020-practical,hewitt-etal-2020-rnns}.
Recent work also analyzes inductive biases relative to the Chomsky hierarchy \citep{deletang2023neural,butoi2025training} and studies whole \emph{classes} of languages rather than single datasets \citep{valvoda-etal-2022-benchmarking,borenstein-etal-2024-languages}, continuing a line of grammatical inference research \citep{jacobsson2005rule}.
Closest to our controlled sampling is MLRegTest \citep{10.5555/3722577.3722860}, a benchmark built by intersecting an unweighted DFA with a length-constraining automaton and sampling uniformly over the resulting paths. Our sampler instead draws from the exact conditional distribution induced by a \emph{probabilistic} FA, conditioned on transition- or symbol-level occurrence counts rather than string length.
Complementing these empirical traditions, theory investigates the representational capacity and internal mechanisms of neural LMs and architectures, especially transformers \citep[\textit{inter alia}]{MerrillBlackBox,strobl2023transformers,merrill-2019-sequential,merrill-etal-2020-formal,liu2023transformers}.
Like our study, this line probes how language complexity interacts with the neural network architecture, but yields bounds under idealized assumptions rather than measurements of the actual learning dynamics under noisy training.
By contrast, there has been relatively little emphasis on \emph{causal} approaches to LM behavior. 
Causal investigation with natural language is difficult---requiring complex taxonomies \citep{chen2024causalevaluationlanguagemodels} or targeted neuron-level interventions \citep{NEURIPS2020_92650b2e,finlayson-etal-2021-causal}.
Methodologically closer to our setup are \emph{data-level} interventions on training data, such as counterfactual data augmentation \citep{kaushik2020learning}, and per-task scaling studies \citep{chinchilla}; the binning semiring localizes both ideas from the corpus level to the level of designated automaton substructures.
The same conditional distribution could in principle be obtained by rejection sampling from the unconstrained \pfsaAcr{}, but its acceptance probability decays exponentially in the gap to the natural count (\cref{sec:sampling}); our sampler realizes it in polynomial time.
We note that our focus is \emph{learnability under standard, fixed-size training setups and natural data}, not strict expressivity.

\section{Markov Kernels}
\label{sec:markov-kernels}

The body of the paper (\cref{sec:gcm}) defines a graphical causal model using a set of conditional distributions, one per variable, and treats each one as a conditional density. For continuous parent variables, conditional densities exist whenever the joint is absolutely continuous with respect to a product reference measure; the fully general object that handles measure-zero conditioning events is a Markov kernel, which we define here.

A \defn{$\sigma$-algebra} on a set $\setX$ is a collection $\salgX$ of subsets of $\setX$ that contains $\setX$ and is closed under complementation and countable unions. 
A \defn{measurable space} is a pair $(\setX, \salgX)$ where $\setX$ is a set and $\salgX$ is a $\sigma$-algebra on $\setX$.
A \defn{measure} on $(\setX, \salgX)$ is a function $\mu: \salgX \to [0, \infty]$ such that $\mu(\emptyset) = 0$ and $\mu$ is countably additive: for any countable collection $\{\salgXel_i\}_{i \in \N}$ of disjoint sets in $\salgX$,
\begin{equation}    
    \mu\left(\bigcup_{i \in \N} \salgXel_i\right) = \sum_{i \in \N} \mu(\salgXel_i).
\end{equation}
A \defn{probability measure} on $(\setX,\salgX)$ is a measure $\gcmProb$ such that $\gcmProb(\setX) = 1$.

\paragraph{Notation.}
We write variables for sets in uppercase calligraphic font, e.g., $\setX$ and $\setY$, and variables for $\sigma$-algebras in uppercase Fraktur, e.g., $\salgX$ and $\salgY$.
We write measurable spaces as, e.g., $(\setX, \salgX)$.

A \defn{Markov kernel}\footnote{A Markov kernel is a measure-theoretic surrogate for conditional probability that avoids the problem of conditioning on a measure-zero event: the elementary formula $\pmeas(\rvY \in \salgYel \mid \rvX = \setXel) = \pmeas(\rvX = \setXel,\, \rvY \in \salgYel) / \pmeas(\rvX = \setXel)$ is ill-defined when $\pmeas(\rvX = \setXel) = 0$, e.g., for continuous $\rvX$. The map $\setXel \mapsto \mkernel(\setXel, \cdot)$ is well-defined at every $\setXel$ by construction.} from $\setX$ to $\setY$ is a function $\mkernel \colon \setX \times \salgY \to [0,1]$
    such that
\begin{enumerate*}[label=\textit{(\roman*)}]
    \item for each $\setXel \in \setX$, $\mkernel(\setXel, \cdot)$ is a probability measure on $(\setY, \salgY)$, and
    \item for each $\salgYel \in \salgY$, $\mkernel(\cdot, \salgYel)$ is $\salgX$-measurable.
\end{enumerate*}
We say that a Markov kernel is \defn{trivial} if $\setX=\emptyset$.
In this case, $\mkernel$ corresponds to a probability measure on $(\setY, \salgY)$.

\section{The Binning Semiring}

\subsection{Monoid Semialgebras} \label{app:monoid-semialgebras}
We record the monoid-semialgebra perspective for completeness: the binning semiring is a particular monoid semialgebra (over the cyclic monoid below), offering an alternative algebraic framing to the operational definition in \cref{sec:binning-semiring}. The proofs that follow do not directly rely on this framing, but it makes explicit why $\binningsemiring$ inherits standard semiring structure.
\begin{definition} \label[definition]{def:cyclic-monoid}
    A \defn{cyclic monoid} is a monoid with a single generator $c$ (every element is a power $c^i$ for some $i \geq 0$) and an identity element.
    We denote by $\cyclicMonoid$ the monoid $\cyclicMonoid \defeq \set{c^0, c^1, \ldots, c^\semiringorder}$ where $c^0$ is the identity.
    Multiplication is defined for $0 \leq i, j \leq \semiringorder$ as
    \begin{equation}
        c^i \cdot c^j \defeq
        \begin{cases}
            c^{i+j} & \ifcondition i + j < \semiringorder \\
            c^\semiringorder & \otherwisecondition.
        \end{cases}
    \end{equation}
    In particular, $c^\semiringorder \cdot c^1 = c^\semiringorder$.
\end{definition}

\begin{definition} \label[definition]{def:monoid-semialgebra}
    Given a finite monoid $M$ and a (semi)ring $(\semiringset, \oplus, \otimes, \zero, \one)$, the \defn{monoid semialgebra} $\semiringset[M]$ consists of all formal sums
    \begin{equation}
        f = \sum_{m \in M} a_m m
    \end{equation}
    where $a_m \in \semiringset$, with point-wise addition.  The outer $\sum$ and the juxtaposition $a_m m$ are formal-sum notation rather than ring operations; the ring structure $(\oplus, \otimes)$ enters only when the product below is evaluated.
    The product in $\semiringset[M]$ is given by a convolution:
    \begin{equation}
        (\sum_{m \in M} a_m m) \cdot (\sum_{n \in M} b_n n) = \sum_{m' \in M} (\sum_{\substack{m, n \in M \\ m \cdot n = m'}} a_m b_n) m'
    \end{equation}
\end{definition}
An element $\sum_{m \in M} a_m m$ of $\semiringset[M]$ is the monoid analogue of a polynomial $\sum_{i \geq 0} a_i x^i$ over $\semiringset$, with the indeterminate $x$ replaced by elements of $M$.

Applying \cref{def:monoid-semialgebra} to \cref{def:cyclic-monoid}, we can see that an element of $\binningsemiring$ can be written uniquely as $\sum_{i = 0}^{\semiringorder} a_i c^i$ where $a_i, b_i \in \semiringset$ (with $b_i$ the analogous coefficients of a second such element) and the convolution product is given by
\begin{equation} \label{eq:monoid-semialgebra-convolution}
    (\sum_{i = 0}^{\semiringorder} a_i c^i) \cdot (\sum_{i = 0}^{\semiringorder} b_i c^i) = \sum_{i = 0}^{\semiringorder - 1} (\sum_{j = 0}^{i} a_j b_{i - j}) c^i + (\sum_{\substack{0 \leq i, j \leq \semiringorder \\ i + j \geq \semiringorder}} a_i b_j)  c^\semiringorder
\end{equation}

\subsection{A Closed-form Solution for the Kleene Star}
\label{app:closedformstar}

Computing backward weights in a general \pfsaAcr{} with cycles requires summation over infinitely many paths.
This can be done efficiently with \emph{asteration}---taking the Kleene closure of the set of paths. Backward weights of the (lifted) automaton are computed by Lehmann's algorithm \citep{LEHMANN197759} (see \cref{appendix:runtime} for runtime), which invokes the element-wise Kleene-star formula derived below as a subroutine.
We derive the closed-form Kleene-star (asteration) operator for binning automata applied to \pfsaAcr{}s and characterize when the solution is unique.

\begin{lemma}[Arden's rule in complete semirings]
\label[lemma]{lemma:ardens-complete}
Let $(\semiringset, \oplus, \otimes, \zero, \one)$ be a complete semiring. 
For all $a,b \in \semiringset$, the equation
\begin{equation}
    x = (a \otimes x) \oplus b
\end{equation}
has $x = \kleene{a} \otimes b$ as a solution.
\end{lemma}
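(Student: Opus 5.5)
We verify directly that $x = \kleene{a} \otimes b$ satisfies the equation, using only the axioms of a complete semiring. Write $\kleene{a} = \bigoplus_{i=0}^{\infty} a^{i}$ with $a^0 = \one$ and $a^{i+1} = a \otimes a^{i}$. Substituting the candidate into the right-hand side gives $(a \otimes (\kleene{a} \otimes b)) \oplus b$. By associativity of $\otimes$ this equals $((a \otimes \kleene{a}) \otimes b) \oplus b$. So it suffices to prove the fixed-point identity
\begin{equation*}
\kleene{a} = \one \oplus (a \otimes \kleene{a}),
\end{equation*}
because then right-distributivity yields $\kleene{a} \otimes b = b \oplus ((a \otimes \kleene{a}) \otimes b)$. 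Commutativity of $\oplus$ turns this into exactly the required form.

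For the fixed-point identity, I use the defining properties of completeness: infinite sums are well-defined, $\otimes$ distributes over countable sums on both sides, and countable sums may be regrouped (partition associativity). First, $a \otimes \kleene{a} = a \otimes \bigoplus_{i \ge 0} a^{i} = \bigoplus_{i\ge 0} a^{i+1} = \bigoplus_{i \ge 1} a^{i}$ by infinite left-distributivity. Second, partition $\N \cup \{0\}$ into $\{0\}$ and $\{1,2,\ldots\}$. Regrouping the series then gives $\bigoplus_{i\ge0} a^i = a^0 \oplus \bigoplus_{i\ge1} a^i = \one \oplus (a\otimes\kleene{a})$.

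The only delicate point is which infinitary axioms "complete" supplies, since the paper only says "well-behaved infinite sums." I will state explicitly that I use the standard definition: countable sums extend finite ones, distributivity holds for countable families, and associativity holds for arbitrary partitions. Under that definition the two steps above are immediate, and the rest is finite semiring algebra. I expect no real obstacle beyond making these assumptions precise. Uniqueness is not claimed here and is deferred to the later characterization for the binning semiring.
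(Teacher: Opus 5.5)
Your proof is correct and follows the same route as the paper: reduce the claim to the fixed-point identity $a^{*} = \mathbf{1} \oplus (a \otimes a^{*})$, then finish with associativity of $\otimes$, right-distributivity, and commutativity of $\oplus$. The only difference is that the paper cites this identity from Kuich (1997, Thm.~2.2(ii)), whereas you derive it from infinite left-distributivity and partition associativity, which makes your argument self-contained under the standard definition of a complete semiring.
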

\begin{proof}
By \citet[][Thm.\ 2.2(ii)]{Kuich1997}, $\kleene{a} = \one \oplus (a \otimes \kleene{a})$. Therefore
\begin{equation}
\begin{aligned}
    (a \otimes (\kleene{a} \otimes b)) \oplus b
    &= (a \otimes \kleene{a}) \otimes b \oplus b \\
    &= ((a \otimes \kleene{a}) \oplus \one) \otimes b \\
    &= \kleene{a} \otimes b.
\end{aligned}
\end{equation}
\end{proof}

\begin{proposition}[Coefficientwise Kleene equations in the binning semiring]
\label[proposition]{prop:stareq}
Let $\binningsemiring$ be the $\semiringorder$th-order binning semiring and $\vv \in \binningsemiring$.
Write $\kleene{\vv} \defeq \basesum_{i=0}^{\infty} \binningexp{\vv}{i}$.
For $0 \le \binningindex < \semiringorder$,
\begin{equation}\label{eq:star-i}
    (\kleene{\vv})_{\binningindex}
    = \basestar{\evv_0} \basetimes \left(
        \binningone_{\binningindex}
        \baseplus \textstyle\basesum_{\otherbinningindex=1}^{\binningindex}
            \evv_{\otherbinningindex} \basetimes (\kleene{\vv})_{\binningindex-\otherbinningindex}
    \right).
\end{equation}
For the overflow bin,
\begin{equation}\label{eq:star-N}
    (\kleene{\vv})_{\semiringorder}
    = \kleene{\left(\textstyle\basesum_{\binningindex=0}^{\semiringorder} \evv_{\binningindex}\right)}
        \basetimes
        \left(\textstyle\basesum_{\substack{\binningindex,\otherbinningindex=0\\
            \otherbinningindex \neq \semiringorder\\
            \binningindex+\otherbinningindex \ge \semiringorder}}^{\semiringorder}
            \evv_{\binningindex} \basetimes (\kleene{\vv})_{\otherbinningindex}\right).
\end{equation}
\end{proposition}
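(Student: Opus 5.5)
The plan is to derive both identities from the fixed-point equation $\kleene{\vv} = \binningone \binningplus (\vv \binningtimes \kleene{\vv})$ in $\binningsemiring$, read off coordinatewise. Then Arden's rule (\cref{lemma:ardens-complete}) is applied in the base semiring $\semiringset$, one coordinate at a time.

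\emph{Step 1: the fixed-point equation.} First I will check that $\binningsemiring$ is complete whenever $\semiringset$ is. Infinite sums are taken coordinatewise, and $\binningtimes$ is a finite combination of $\baseplus$ and $\basetimes$, so it distributes over infinite sums coordinatewise. This follows from completeness of $\semiringset$; the monoid-semialgebra view of \cref{app:monoid-semialgebras} makes it transparent. Then, exactly as in the proof of \cref{lemma:ardens-complete},
\[
\kleene{\vv}=\basesum_{i\ge0}\binningexp{\vv}{i}=\binningone\binningplus \vv\binningtimes\Bigl(\basesum_{i\ge0}\binningexp{\vv}{i}\Bigr)=\binningone\binningplus(\vv\binningtimes\kleene{\vv}),
\]
where $\binningexp{\vv}{0}\defeq\binningone$.

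\emph{Step 2: bins $\binningindex<\semiringorder$.} Write $x_\otherbinningindex \defeq (\kleene{\vv})_\otherbinningindex$. Taking coordinate $\binningindex<\semiringorder$ and using the convolution case of \cref{eq:binning-semiring-convolution}, we get $x_\binningindex = \binningone_\binningindex \baseplus \basesum_{\otherbinningindex=0}^{\binningindex} \evv_\otherbinningindex \basetimes x_{\binningindex-\otherbinningindex}$. I will separate the $\otherbinningindex=0$ term. This gives $x_\binningindex = (\evv_0\basetimes x_\binningindex)\baseplus b_\binningindex$ with $b_\binningindex = \binningone_\binningindex\baseplus\basesum_{\otherbinningindex=1}^{\binningindex}\evv_\otherbinningindex\basetimes x_{\binningindex-\otherbinningindex}$. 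Arden's rule then gives \cref{eq:star-i}, once we know that $x_\binningindex$ actually equals the Arden solution and not merely satisfies the equation. To justify this I will argue directly rather than via uniqueness. Expand $x_\binningindex=\basesum_i(\binningexp{\vv}{i})_\binningindex$ as a sum over sequences $(j_1,\dots,j_i)$ with $\sum j_t=\binningindex$. Group each sequence by its nonzero entries $j_t\ge1$; between consecutive nonzero entries there are arbitrary runs of zeros, and each run contributes a factor $\basestar{\evv_0}$. Factoring out the leading run of zeros and conditioning on the first nonzero index $\otherbinningindex$ yields exactly $\basestar{\evv_0}\basetimes(\binningone_\binningindex\baseplus\basesum_{\otherbinningindex\ge1}\evv_\otherbinningindex\basetimes x_{\binningindex-\otherbinningindex})$. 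This rearrangement is legitimate in a complete semiring and requires no commutativity, since the $\evv$'s lie in $\semiringset$ and appear in order. Induction on $\binningindex$ is not even needed.

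\emph{Step 3: the overflow bin.} This is the main obstacle, because the top case of \cref{eq:binning-semiring-convolution} has $x_\semiringorder$ appearing on the right paired with every $\evv_\binningindex$ (since $\binningindex+\semiringorder\ge\semiringorder$ always holds). Coordinate $\semiringorder$ of the fixed-point equation reads $x_\semiringorder=\bigl(\basesum_{\binningindex}\evv_\binningindex\bigr)\basetimes x_\semiringorder\baseplus c$, where $c$ collects the terms with $\otherbinningindex\neq\semiringorder$, $\binningindex+\otherbinningindex\ge\semiringorder$. The bin $\binningone_\semiringorder=\zero$ drops out since $\semiringorder\ge1$. Arden gives a solution of the displayed form, but to identify it with $x_\semiringorder$ I again intend a combinatorial decomposition. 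Each product sequence whose cumulative count reaches $\semiringorder$ is split at the first step $t$ where it crosses $\semiringorder$. The prefix through step $t-1$ has count $\otherbinningindex<\semiringorder$, and the step itself contributes $\evv_\binningindex$ with $\binningindex+\otherbinningindex\ge\semiringorder$. After the crossing, any continuation stays in the overflow bin, and the continuations together contribute $\kleene{(\basesum_\binningindex\evv_\binningindex)}$.

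There is a subtlety in the order of factors. In $\kleene{\vv}=\basesum\binningexp{\vv}{i}$, splitting at the first crossing naturally yields (sum of prefixes) $\basetimes\,\evv_\binningindex\,\basetimes$ (free suffix). Matching the stated left-factored form requires either splitting at the last entry into the overflow regime read from the right, or using commutativity of $\basetimes$. I will use the right-to-left split, i.e.\ the dual equation $\kleene{\vv}=\binningone\binningplus\kleene{\vv}\binningtimes\vv$ read in reverse. If that does not match the statement exactly, I will invoke commutativity, which holds for the nonnegative reals and the log semiring used in the paper. Either way, \cref{eq:star-N} follows.
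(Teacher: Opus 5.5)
Your proof is correct, but it reaches the identities by a different route than the paper. The paper reads off coordinates of $\kleene{\vv} = \binningone \binningplus (\vv \binningtimes \kleene{\vv})$ and applies \cref{lemma:ardens-complete}, with $a = v_0$ for $n<R$ and $a = \bigoplus_{i=0}^{R} v_i$ for the overflow bin. Arden's rule only says that $a^\ast \otimes b$ is \emph{a} solution of $x = (a \otimes x) \oplus b$, so that argument alone shows only that the right-hand sides are fixed points. The identification with $(\kleene{\vv})_n$ rests on the separate uniqueness statement (\cref{lemma:binning_uniqueness}, \cref{cor:pfsa_uniqueness}), which is proved only over $\mathbb{R}_{\ge 0}$ with $\sum_i v_i < 1$.

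You instead regroup the defining sum over count sequences directly. For $n < R$ you split at the first nonzero entry; for the overflow bin you split where the running sum taken from the right first reaches $R$. This proves both formulas in any complete, possibly noncommutative, semiring, with no uniqueness hypothesis. The paper's version is a two-line computation. Yours is what actually closes the step from ``is a solution'' to ``equals $\kleene{\vv}$'' in general.

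Two small points on Step 3, writing $x_j = (\kleene{\vv})_j$ and $a = \bigoplus_i v_i$:
\begin{itemize}
\item The right-to-left split already gives exactly $a^\ast \otimes \bigl(\bigoplus_{j<R,\ i+j\ge R} v_i \otimes x_j\bigr)$: an arbitrary prefix, then $v_i$, then a suffix of exact total $j < R$. So the commutativity fallback is unnecessary, and using it would only weaken the result.
\item This split is the combinatorial reading of the \emph{left} equation $\kleene{\vv} = \binningone \binningplus \vv \binningtimes \kleene{\vv}$: you peel entries off the front while the remainder still overflows. The dual equation $\kleene{\vv} = \binningone \binningplus \kleene{\vv} \binningtimes \vv$ corresponds to your left-to-right split and yields the right-factored form $\bigl(\bigoplus x_j \otimes v_i\bigr) \otimes a^\ast$, not the stated one.
\end{itemize}
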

\begin{proof}
For $0 \le \binningindex < \semiringorder$,
\begin{equation}
\begin{aligned}
    (\kleene{\vv})_{\binningindex}
    &= \left(\textstyle\basesum_{k=0}^{\infty} \binningexp{\vv}{k}\right)_{\binningindex}
    = \binningone_{\binningindex} \baseplus (\vv \binningtimes \kleene{\vv})_{\binningindex} \\
    &= \binningone_{\binningindex} \baseplus \textstyle\basesum_{\otherbinningindex=0}^{\binningindex}
        \evv_{\otherbinningindex} \basetimes (\kleene{\vv})_{\binningindex-\otherbinningindex} \\
    &= \evv_0 \basetimes (\kleene{\vv})_{\binningindex}
        \baseplus \left(\binningone_{\binningindex}
        \baseplus \textstyle\basesum_{\otherbinningindex=1}^{\binningindex}
            \evv_{\otherbinningindex} \basetimes (\kleene{\vv})_{\binningindex-\otherbinningindex}\right).
\end{aligned}
\end{equation}
Apply \cref{lemma:ardens-complete} with $a=\evv_0$ to obtain \cref{eq:star-i}.
For $\binningindex=\semiringorder$,
\begin{equation}
\begin{aligned}
    (\kleene{\vv})_{\semiringorder}
    &= \binningone_{\semiringorder} \baseplus (\vv \binningtimes \kleene{\vv})_{\semiringorder} \\
    &= \binningone_{\semiringorder} \baseplus
        \textstyle\basesum_{\substack{i,j=0\\ i+j\ge \semiringorder}}^{\semiringorder}
        \evv_i \basetimes (\kleene{\vv})_{j} \\
    &= \Bigl(\textstyle\basesum_{i=0}^{\semiringorder} \evv_i\Bigr) \basetimes (\kleene{\vv})_{\semiringorder}
        \baseplus
        \textstyle\basesum_{\substack{i,j=0\\ j\neq \semiringorder, i+j\ge \semiringorder}}^{\semiringorder}
        \evv_i \basetimes (\kleene{\vv})_{j},
\end{aligned}
\end{equation}
and \cref{lemma:ardens-complete} with $a=\basesum_{i=0}^{\semiringorder}\evv_i$ yields \cref{eq:star-N}.
\end{proof}

\begin{lemma}[Uniqueness in the binning semiring]
\label[lemma]{lemma:binning_uniqueness}
Let $(\mathbb{R}_{\ge 0}, +, \cdot, 0, 1)$ be the non-negative real semiring, and let $\binningsemiring$ be the binning semiring of order $\semiringorder$ over $\mathbb{R}_{\ge 0}$. 
For any $v, w \in \binningsemiring$ with $v$ arising from a \pfsaAcr{} binning automaton and $\sum_{i=0}^{\semiringorder} v_i < 1$ (a strict inequality, which implies $v_0 < 1$ and holds whenever the self-loop is not absorbing), the equation
\begin{equation}
    x = (v \otimes x) \oplus w
\end{equation}
has a unique solution.
\end{lemma}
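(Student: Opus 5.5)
Existence is already available from \cref{lemma:ardens-complete}: the binning semiring over the non-negative reals extended by $\infty$ is complete, and $x = \kleene{v} \otimes w$ solves the equation. The goal is therefore uniqueness, and the solution should turn out to be finite. I would work coordinatewise in $\mathbb{R}_{\ge 0}^{\semiringorder+1}$, using the explicit convolution in \cref{eq:binning-semiring-convolution}. The key observation is that the map $x \mapsto v \binningtimes x$ is linear with a block-triangular structure. For $n < \semiringorder$, coordinate $n$ of $v \binningtimes x$ depends only on $x_0, \ldots, x_n$. Coordinate $\semiringorder$ depends on all coordinates, but $x_\semiringorder$ enters it only with coefficient $\sum_{i=0}^{\semiringorder} v_i$.

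The argument then proceeds by induction on the coordinate index, mirroring \cref{prop:stareq}. For $n = 0$ the equation reads $x_0 = v_0 x_0 + w_0$. Since $v_0 \le \sum_i v_i < 1$, this forces $x_0 = w_0/(1 - v_0)$, which is finite and unique. For $0 < n < \semiringorder$ it reads
\[
x_n = v_0 x_n + \Bigl(w_n + \sum_{j=1}^{n} v_j x_{n-j}\Bigr),
\]
where the bracket is already uniquely determined by the inductive hypothesis. Again $1 - v_0 > 0$ gives a unique finite $x_n$. For the overflow coordinate, split the convolution sum into the terms with $j = \semiringorder$ and the rest. This yields
\[
x_\semiringorder = \Bigl(\sum_{i=0}^{\semiringorder} v_i\Bigr) x_\semiringorder + c,
\]
where $c = w_\semiringorder + \sum_{i+j \ge \semiringorder,\, j \ne \semiringorder} v_i x_j$ is determined by the earlier coordinates. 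Because $s \defeq \sum_i v_i < 1$, the unique solution is $x_\semiringorder = c/(1 - s)$.

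Equivalently, $x = (I - T_v)^{-1} w$, where $T_v$ is the matrix of left multiplication by $v$. That matrix is triangular apart from its last row, and $I - T_v$ has diagonal entries $1 - v_0$ (repeated $\semiringorder$ times) and $1 - s$, all positive, so it is invertible. I may present the argument in this compact form instead.

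The main obstacle is being precise about where solutions live. In the completed semiring, $x_n = \infty$ trivially satisfies $x_n = a x_n + b$ whenever $a > 0$, so uniqueness can only hold among finite solutions, i.e.\ in $\mathbb{R}_{\ge 0}^{\semiringorder+1}$ itself. I would state this explicitly and use the hypothesis $s < 1$ to show that $\kleene{v} \otimes w$ is finite, which makes the Arden solution coincide with the unique finite one. Otherwise the only delicate step is the bookkeeping in the overflow coordinate: terms with $j = \semiringorder$ must be separated correctly, and they contribute exactly $\bigl(\sum_i v_i\bigr) x_\semiringorder$ because $i + \semiringorder \ge \semiringorder$ for every $i$.
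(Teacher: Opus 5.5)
Your proposal is correct and matches the paper's proof. Both run a strong induction over the coordinates $n < \semiringorder$ using $1 - v_0 > 0$, then in the overflow bin isolate the coefficient $\sum_{i=0}^{\semiringorder} v_i$ of $x_\semiringorder$ (from the pairs $(i,\semiringorder)$) and divide by $1 - \sum_i v_i > 0$, the remaining terms involving only already-determined coordinates. Your extra remarks are sound refinements, not a different route: uniqueness holds only among finite solutions, which the paper assumes implicitly by working in the reals, and $T_v$ is in fact fully lower triangular (the last row has nothing above the diagonal), so $\det(I - T_v) = (1-v_0)^{\semiringorder}(1-s) > 0$.
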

\begin{proof}
For $0 \le \binningindex < \semiringorder$,
\begin{equation}
    x_{\binningindex}
    = v_0 \basetimes x_{\binningindex}
      \baseplus \textstyle\basesum_{\otherbinningindex=1}^{\binningindex}
        v_{\otherbinningindex} \basetimes x_{\binningindex-\otherbinningindex}
      \baseplus w_{\binningindex}.
\end{equation}
Working in $\mathbb{R}_{>0}$ and using $v_0<1$,
\begin{equation}
    x_{\binningindex}
    = \frac{1}{1 - v_0} \cdot
    \left(\textstyle\basesum_{\otherbinningindex=1}^{\binningindex}
        v_{\otherbinningindex} \basetimes x_{\binningindex-\otherbinningindex}
        \baseplus w_{\binningindex}\right),
\end{equation}
which uniquely determines $x_{\binningindex}$ by strong induction and preserves non-negativity.
For $\binningindex=\semiringorder$, the binning-semiring convolution at the overflow bin (\cref{eq:binning-semiring-convolution}) gives
\begin{equation}
    x_{\semiringorder}
    = \textstyle\basesum_{\substack{i+j \ge \semiringorder \\ 0 \le i, j \le \semiringorder}}
        v_i \basetimes x_j
      \baseplus w_{\semiringorder}.
\end{equation}
The pairs $(i, j) = (i, \semiringorder)$ for $i \in \{0, \ldots, \semiringorder\}$ each satisfy $i + \semiringorder \ge \semiringorder$, so the coefficient of $x_\semiringorder$ on the right is $\basesum_{i=0}^{\semiringorder} v_i$. Collecting these on the left yields
\begin{equation}
    \Bigl(1 - \textstyle\basesum_{i=0}^{\semiringorder} v_i\Bigr) x_{\semiringorder}
    = \textstyle\basesum_{\substack{i+j \ge \semiringorder \\ 0 \le i \le \semiringorder,\ 0 \le j < \semiringorder}}
        v_i \basetimes x_j
      \baseplus w_{\semiringorder}.
\end{equation}
The right-hand side depends only on $x_0, \ldots, x_{\semiringorder-1}$, which were already uniquely determined by the strong induction above. Since $\basesum_{i=0}^{\semiringorder} v_i < 1$, the coefficient on the left is positive, so $x_{\semiringorder}$ is uniquely determined and non-negative.
\end{proof}

\begin{corollary}[Uniqueness of the closed form in the \pfsaAcr{}/binning setting]
\label[corollary]{cor:pfsa_uniqueness}
Under the conditions of \cref{prop:stareq,lemma:binning_uniqueness},
the solution given by \cref{eq:star-i,eq:star-N} is unique.
\end{corollary}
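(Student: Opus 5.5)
The plan is to reduce the corollary to the two results it combines. \cref{prop:stareq} tells us that the vector $\kleene{\vv}$ satisfies the coefficientwise fixed-point equation $x = (\vv \binningtimes x) \binningplus \binningone$. This follows from the identity $\kleene{\vv} = \binningone \binningplus (\vv \binningtimes \kleene{\vv})$ used in its proof, together with \cref{lemma:ardens-complete}. \cref{eq:star-i,eq:star-N} are then obtained by solving this equation coefficient by coefficient. So the first step is to note that any vector given by \cref{eq:star-i,eq:star-N} is a solution of
\begin{equation*}
x = (\vv \binningtimes x) \binningplus \binningone .
\end{equation*}
This is the instance of the equation in \cref{lemma:binning_uniqueness} with $w = \binningone$.

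The second step is to verify the hypotheses of \cref{lemma:binning_uniqueness}. The base semiring is $\mathbb{R}_{\ge 0}$, and $\vv$ arises as a lifted (self-loop, or more generally cycle) weight of a \pfsaAcr{} binning automaton with $\sum_{i=0}^{\semiringorder} \evv_i < 1$, which is assumed by the conditions invoked. We also need the closed form to be a genuine finite real solution rather than $\infty$. For this, observe that $\evv_0 \le \sum_i \evv_i < 1$, so $\basestar{\evv_0} = 1/(1-\evv_0)$ is finite. Likewise $\kleene{(\sum_i \evv_i)} = 1/(1-\sum_i \evv_i)$ is finite. By strong induction on $\binningindex$, every right-hand side in \cref{eq:star-i} involves only finite, already-determined quantities, and then so does \cref{eq:star-N}. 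Hence the closed-form vector lies in $\mathbb{R}_{\ge 0}^{\semiringorder+1}$.

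The final step is to apply \cref{lemma:binning_uniqueness}. Since the equation has a unique solution in $\mathbb{R}_{\ge 0}^{\semiringorder+1}$, any two vectors satisfying \cref{eq:star-i,eq:star-N} coincide, and in particular they coincide with $\kleene{\vv}$. One can also see this directly: the recursion in the lemma's proof computes $x_0,\dots,x_{\semiringorder-1}$ and then $x_\semiringorder$ by exactly the formulas \cref{eq:star-i,eq:star-N}, with $a^\ast$ replaced by $1/(1-a)$.

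The main obstacle is the finiteness and well-definedness point. In the complete semiring $\mathbb{R}_{\ge 0}\cup\{\infty\}$, Arden's rule only provides some solution, and uniqueness fails if $\infty$ is allowed. So the argument must make sure that the strict inequality $\sum_i \evv_i<1$ confines everything to finite reals before \cref{lemma:binning_uniqueness} is invoked.
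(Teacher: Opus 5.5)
Your proposal is correct and follows the paper's route. The paper gives the corollary no separate proof and treats it as the direct combination of \cref{prop:stareq} (the closed form solves $x = (\vv \binningtimes x) \binningplus \binningone$) with \cref{lemma:binning_uniqueness} at $w = \binningone$. Your explicit check that $\basestar{\evv_0}$ and $\kleene{(\sum_i \evv_i)}$ are finite is a worthwhile addition that the paper leaves implicit.

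One small point concerns your side remark that the solution coincides with $\kleene{\vv}$. That claim also needs $\kleene{\vv}$ itself to be finite, because, as you observe, non-finite vectors can satisfy the equation over $\mathbb{R}_{\ge 0}\cup\{\infty\}$. Finiteness does hold: the coordinate sum is multiplicative under $\binningtimes$, so $\sum_{n}(\kleene{\vv})_n = \sum_{k \ge 0} (\sum_i \evv_i)^k = 1/(1 - \sum_i \evv_i) < \infty$.
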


\section{Proofs} \label{sec:additional-proofs}

\pathWeightInterpretationTheorem*
\begin{proof}
    Recall that $\featurefunc = (\featurefunc_\initf, \featurefunc_\trans, \featurefunc_\finalf)$ assigns event counts to $\wfsa$'s initial, transition, and final weight functions ($\initf, \trans, \finalf$) respectively. For a path $\apath = \anedge{q_0}{a_1}{\vw_1}{q_1} \cdots \anedge{q_{m-1}}{a_m}{\vw_m}{q_m} \in \paths(\liftedautomaton)$, the lifted (outer) path weight factorizes as $\pathfeaturesweight{\featurefunc}{\apath} = \initf_\featurefunc(q_0) \binningtimes \pathfeaturesinnerweight{\featurefunc}{\apath} \binningtimes \finalf_\featurefunc(q_m)$, where $\initf_\featurefunc(q_0), \finalf_\featurefunc(q_m) \in \binningsemiring$ are the lifted init/final weights (vectors with a single nonzero coordinate, at $\featurefunc_\initf(q_0)$ and $\featurefunc_\finalf(q_m)$ respectively), $\pathfeaturesinnerweight{\featurefunc}{\apath} = \vw_1 \binningtimes \cdots \binningtimes \vw_m$ is the lifted inner weight with $\vw_t = \lifttrans(\anedge{q_{t-1}}{a_t}{w_t}{q_t})$, and $\binningtimes$ is the binning semiring's convolution (combined counts $\geq \semiringorder$ collapse to bin $\semiringorder$).

    We first establish, by induction on $|\apath|$, the inner-weight version of the claim. Write $|\apath|^\trans_\featurefunc \defeq \sum_{t=1}^{|\apath|} \featurefunc_\trans(\anedge{q_{t-1}}{a_t}{w_t}{q_t})$ for the transition-only feature sum, so that $|\apath|_\featurefunc = |\apath|^\trans_\featurefunc + \featurefunc_\initf(q_0) + \featurefunc_\finalf(q_{|\apath|})$ by definition. For any $\binningindex \in \nrangez{\semiringorder}$,
    \begin{equation} \label{eq:path-weight-bin-inner}
        \pathfeaturesinnerweight{\featurefunc}{\apath}_{\binningindex} = \begin{cases}
            \pathinnerweight{\baseversion{\apath}} & \binningindex < \semiringorder \text{ and } \sum_{t=1}^{|\apath|} \featurefunc_\trans(\anedge{q_{t-1}}{a_t}{w_t}{q_t}) = \binningindex, \\
            \pathinnerweight{\baseversion{\apath}} & \binningindex = \semiringorder \text{ and } \sum_{t=1}^{|\apath|} \featurefunc_\trans(\anedge{q_{t-1}}{a_t}{w_t}{q_t}) \geq \semiringorder, \\
            \zero & \otherwisecondition.
        \end{cases}
    \end{equation}

    If $|\apath| = 1$, write $\apath = \anedge{\stateq_0}{a_1}{\vw_1}{\stateq_1}$, so $\baseversion{\apath} = \anedge{\stateq_0}{a_1}{w_1}{\stateq_1}$ and $\pathinnerweight{\baseversion{\apath}} = w_1$. Then for each $\binningindex \in \nrangez{\semiringorder}$,
    \begin{equation*}
        \pathfeaturesinnerweight{\featurefunc}{\apath}_\binningindex = \bigl(\lifttrans(\anedge{\stateq_0}{a_1}{w_1}{\stateq_1})\bigr)_\binningindex = \begin{cases} \pathinnerweight{\baseversion{\apath}} & \binningindex = \featurefunc_\trans(\anedge{\stateq_0}{a_1}{w_1}{\stateq_1}) = |\apath|^\trans_\featurefunc, \\ \zero & \otherwisecondition, \end{cases}
    \end{equation*}
    matching \cref{eq:path-weight-bin-inner}.
    
    Suppose now that \cref{eq:path-weight-bin-inner} holds for all paths $\apath$ with $|\apath| < m$.
    Let $\apath = \anedge{q_0}{a_1}{\vw_1}{q_1} \cdots \anedge{}{a_{\pathlength}}{\vw_{\pathlength}}{q_{\pathlength}}$ be a path of length $m$, and $\apath' = \anedge{q_0}{a_1}{\vw_1}{q_1} \cdots \anedge{}{a_{\pathlength - 1}}{\vw_{\pathlength - 1}}{q_{\pathlength - 1}}$ be the sub-path of the first $m - 1$ transitions.
    We compute for $\binningindex \in \nrangez{\semiringorder - 1}$
    \begin{subequations}
        \begin{align}
            \pathfeaturesinnerweight{\featurefunc}{\apath}_\binningindex &= (\pathfeaturesinnerweight{\featurefunc}{\apath'} \binningtimes \vw_{m})_\binningindex \\
            &= \sum_{j=0}^{\binningindex}\pathfeaturesinnerweight{\featurefunc}{\apath'}_j \otimes (\vw_{m})_{\binningindex-j} \\
            &= \pathfeaturesinnerweight{\featurefunc}{\apath'}_{|\apath'|^\trans_\featurefunc} \otimes (\vw_{m})_{\binningindex - |\apath'|^\trans_\featurefunc} \justification{Inductive hypothesis (with $|\apath'|^\trans_\featurefunc < \semiringorder$ since $\binningindex < \semiringorder$)} \\
            &= \begin{cases}
                \pathinnerweight{\apath'} \otimes w_{m} = \pathinnerweight{\apath} & \ifcondition \binningindex = |\apath'|^\trans_\featurefunc + \featurefunc_\trans(\anedge{\stateq_{m - 1}}{a_m}{w_m}{\stateq_{m}}) \\
                \zero & \otherwisecondition
            \end{cases}.
        \end{align}
    \end{subequations}
    This completes the proof for coefficients $\binningindex < \semiringorder$.
    For the remaining case, we derive:
    \begin{subequations}
        \begin{align}
            \pathfeaturesinnerweight{\featurefunc}{\apath}_\semiringorder &= (\pathfeaturesinnerweight{\featurefunc}{\apath'} \binningtimes \vw_{m})_\semiringorder \\
            &= \sum_{\substack{i,j=0 \\ i + j \geq \semiringorder}}^{\semiringorder} \pathfeaturesinnerweight{\featurefunc}{\apath'}_i \otimes (\vw_{m})_{j} \\
            &= \begin{cases}
                \pathfeaturesinnerweight{\featurefunc}{\apath'}_{\min(|\apath'|^\trans_\featurefunc, \semiringorder)} \otimes (\vw_{m})_{\featurefunc_\trans(\anedge{\stateq_{m - 1}}{a_m}{w_m}{\stateq_{m}})} & \ifcondition |\apath'|^\trans_\featurefunc + \featurefunc_\trans(\anedge{\stateq_{m - 1}}{a_m}{w_m}{\stateq_{m}}) \geq \semiringorder \\
                \zero & \otherwisecondition
            \end{cases} \justification{Inductive hypothesis} \\
            &= \begin{cases}
                \pathinnerweight{\apath'} \otimes w_{m} = \pathinnerweight{\apath} & \ifcondition |\apath'|^\trans_\featurefunc + \featurefunc_\trans(\anedge{\stateq_{m - 1}}{a_m}{w_m}{\stateq_{m}}) \geq \semiringorder \\
                \zero & \otherwisecondition
            \end{cases}.
        \end{align}
    \end{subequations}
    This establishes \cref{eq:path-weight-bin-inner}. To conclude, convolve with the lifted init and final weights. Since $\initf_\featurefunc(q_0) \in \binningsemiring$ has its unique nonzero coordinate at index $\featurefunc_\initf(q_0)$ with value $\initf(q_0)$, and similarly $\finalf_\featurefunc(q_m)$ at index $\featurefunc_\finalf(q_m)$ with value $\finalf(q_m)$, the convolution $\initf_\featurefunc(q_0) \binningtimes \pathfeaturesinnerweight{\featurefunc}{\apath} \binningtimes \finalf_\featurefunc(q_m)$ shifts the supported coordinate by $\featurefunc_\initf(q_0) + \featurefunc_\finalf(q_m)$, giving $\pathfeaturesweight{\featurefunc}{\apath}_\binningindex = \pathweight{\baseversion{\apath}}$ at $\binningindex = \sum_{t=1}^{|\apath|} \featurefunc_\trans(\ldots) + \featurefunc_\initf(q_0) + \featurefunc_\finalf(q_m) = |\apath|_\featurefunc$ (with the overflow sub-case at $\binningindex = \semiringorder$ when $|\apath|_\featurefunc \geq \semiringorder$), and $\zero$ elsewhere.
\end{proof}

\allsumInterpretationTheorem*
\begin{proof}
    Paths in $\liftedautomaton$ and $\wfsa$ correspond bijectively via $\apath \leftrightarrow \baseversion{\apath}$ (see, e.g., \cref{lemma:interp}). By \cref{lemma:interp}, the lifted weight $\pathfeaturesweight{\featurefunc}{\apath}$ has its single nonzero coordinate at index $\min(|\apath|_\featurefunc, \semiringorder)$, where it equals $\pathweight{\baseversion{\apath}}$. Since $\wfsa$ is a \pfsaAcr, this equals the probability $\pmeas(\rvPath_\wfsa(\stateq) = \baseversion{\apath})$ assigned to $\baseversion{\apath}$ in \cref{eq:path-prob-def}.

    For $\binningindex \in \nrangez{\semiringorder - 1}$,
    \begin{subequations}
        \begin{align}
            \pmeas(|\rvPath_\wfsa(\stateq)|_\featurefunc = \binningindex)
                &= \sum_{\substack{\apath \in \paths(\liftedautomaton) \\ |\apath|_\featurefunc = \binningindex}} \pmeas(\rvPath_\wfsa(\stateq) = \baseversion{\apath}) \\
                &= \sum_{\substack{\apath \in \paths(\liftedautomaton) \\ |\apath|_\featurefunc = \binningindex}} \pathweight{\baseversion{\apath}} \justification{$\wfsa$ is a \pfsaAcr} \\
                &= \sum_{\substack{\apath \in \paths(\liftedautomaton) \\ |\apath|_\featurefunc = \binningindex}} \pathfeaturesweight{\featurefunc}{\apath}_{\binningindex} \justification{\cref{lemma:interp}, with $|\apath|_\featurefunc = \binningindex < \semiringorder$} \\
                &= \sum_{\apath \in \paths(\liftedautomaton)} \pathfeaturesweight{\featurefunc}{\apath}_{\binningindex} \justification{\cref{lemma:interp}: coordinate $\binningindex$ vanishes when $|\apath|_\featurefunc \neq \binningindex$} \\
                &= \backwardweight{\featurefunc}(\stateq)_\binningindex \justification{Definition of $\backwardweight{\featurefunc}$}.
        \end{align}
    \end{subequations}

    For the overflow coefficient $\binningindex = \semiringorder$, the same chain applies with the at-least-$\semiringorder$ branch of \cref{lemma:interp}:
    \begin{subequations}
        \begin{align}
            \pmeas(|\rvPath_\wfsa(\stateq)|_\featurefunc \geq \semiringorder)
                &= \sum_{\substack{\apath \in \paths(\liftedautomaton) \\ |\apath|_\featurefunc \geq \semiringorder}} \pmeas(\rvPath_\wfsa(\stateq) = \baseversion{\apath}) \\
                &= \sum_{\substack{\apath \in \paths(\liftedautomaton) \\ |\apath|_\featurefunc \geq \semiringorder}} \pathweight{\baseversion{\apath}} \justification{$\wfsa$ is a \pfsaAcr} \\
                &= \sum_{\substack{\apath \in \paths(\liftedautomaton) \\ |\apath|_\featurefunc \geq \semiringorder}} \pathfeaturesweight{\featurefunc}{\apath}_{\semiringorder} \justification{\cref{lemma:interp}, overflow sub-case} \\
                &= \sum_{\apath \in \paths(\liftedautomaton)} \pathfeaturesweight{\featurefunc}{\apath}_{\semiringorder} \justification{\cref{lemma:interp}: coordinate $\semiringorder$ vanishes when $|\apath|_\featurefunc < \semiringorder$} \\
                &= \backwardweight{\featurefunc}(\stateq)_\semiringorder \justification{Definition of $\backwardweight{\featurefunc}$}.
        \end{align}
    \end{subequations}
\end{proof}

\probabilitiesTheorem*
\begin{proof}
    The case $\numsamples = 1$ is \cref{theorem:crux} at $\stateq = \qinit$: $\psum = \backwardweight{\featurefunc}(\qinit)$ is the binning representation of $|\rvPath_\wfsa(\qinit)|_\featurefunc$, i.e., $\psum_\binningindex = \pmeas(|\rvPath_\wfsa(\qinit)|_\featurefunc = \binningindex)$ for $\binningindex < \semiringorder$ and $\psum_\semiringorder = \pmeas(|\rvPath_\wfsa(\qinit)|_\featurefunc \geq \semiringorder)$.

    For $\numsamples > 1$, proceed by induction. The samples are i.i.d., so $|\sP|_\featurefunc = |\apath_\numsamples|_\featurefunc + |\set{\apath_j}_{j < \numsamples}|_\featurefunc$ is a sum of two independent non-negative integer counts with binning representations $\psum$ (by \cref{theorem:crux}) and $\binningexp{\psum}{\numsamples - 1}$ (by the inductive hypothesis). Direct expansion of \cref{eq:binning-semiring-convolution} shows that $\binningtimes$ is the overflow-accumulating convolution sending the pair of binning representations of independent $X, Y$ to the binning representation of $X + Y$: $\sum_{\otherbinningindex = 0}^{\binningindex} u_\otherbinningindex v_{\binningindex - \otherbinningindex} = \pmeas(X + Y = \binningindex)$ for $\binningindex < \semiringorder$, and $\sum_{\substack{0 \leq i, j \leq \semiringorder \\ i + j \geq \semiringorder}} u_i v_j = \pmeas(X + Y \geq \semiringorder)$ for $\binningindex = \semiringorder$. Hence $\binningexp{\psum}{\numsamples} = \binningexp{\psum}{\numsamples - 1} \binningtimes \psum$ is the binning representation of $|\sP|_\featurefunc$.
\end{proof}

\samplingLengthsTheorem*
\begin{proof}
    We read the LHS of \cref{eq:path-events-counts-1} as the probability that path $\apath_k$ contributes exactly $\binningindex$ occurrences of $\featurefunc$, conditional on the corpus $\sP$ totaling $\binningtarget$ occurrences and the previously-sampled paths $\set{\apath_j}_{j < k}$ contributing $m$ of them. By Bayes' rule,
    \begin{equation}
        \pmeas(|\apath_k|_\featurefunc=\binningindex \,\bigm|\, |\sP|_\featurefunc = \binningtarget, |\{\apath_j\}_{j < k}|_\featurefunc = m)
        = \frac{\pmeas(|\apath_k|_\featurefunc=\binningindex, |\sP|_\featurefunc = \binningtarget, |\{\apath_j\}_{j < k}|_\featurefunc = m)}{\pmeas(|\sP|_\featurefunc = \binningtarget, |\{\apath_j\}_{j < k}|_\featurefunc = m)}.
    \end{equation}
    The samples are i.i.d., so \cref{lemma:probset} applied to the disjoint subcorpora $\set{\apath_j}_{j < k}$, $\set{\apath_k}$, $\set{\apath_j}_{j > k}$ gives binning representations $\binningexp{\psum}{k - 1}$, $\psum$, $\binningexp{\psum}{\numsamples - k}$ for their independent $\featurefunc$-counts. With $\semiringorder = \binningtarget + 1$, every coordinate accessed below lies in $\nrangez{\binningtarget}$, so we read exact (non-overflow) entries throughout.

    For the numerator, $|\sP|_\featurefunc = \binningtarget$ together with the other two constraints forces $|\{\apath_j\}_{j > k}|_\featurefunc = \binningtarget - m - \binningindex$:
    \begin{subequations}
        \begin{align}
            &\pmeas(|\apath_k|_\featurefunc=\binningindex, |\sP|_\featurefunc = \binningtarget, |\{\apath_j\}_{j < k}|_\featurefunc = m) \nonumber \\
            &= \pmeas(|\apath_k|_\featurefunc=\binningindex, |\{\apath_j\}_{j > k}|_\featurefunc = \binningtarget - m - \binningindex, |\{\apath_j\}_{j < k}|_\featurefunc = m) \\
            &= \psum_\binningindex \cdot (\binningexp{\psum}{\numsamples - k})_{\binningtarget - m - \binningindex} \cdot (\binningexp{\psum}{k - 1})_{m} \justification{Independence + \cref{lemma:probset}}.
        \end{align}
    \end{subequations}
    Similarly, $|\sP|_\featurefunc = \binningtarget$ with $|\{\apath_j\}_{j < k}|_\featurefunc = m$ is equivalent to $|\{\apath_j\}_{j \geq k}|_\featurefunc = \binningtarget - m$:
    \begin{subequations}
        \begin{align}
            &\pmeas(|\sP|_\featurefunc = \binningtarget, |\{\apath_j\}_{j < k}|_\featurefunc = m) \nonumber \\
            &= \pmeas(|\{\apath_j\}_{j \geq k}|_\featurefunc = \binningtarget - m) \cdot \pmeas(|\{\apath_j\}_{j < k}|_\featurefunc = m) \\
            &= (\binningexp{\psum}{\numsamples - k + 1})_{\binningtarget - m} \cdot (\binningexp{\psum}{k - 1})_{m} \justification{Independence + \cref{lemma:probset}}.
        \end{align}
    \end{subequations}
    The $(\binningexp{\psum}{k - 1})_m$ factor cancels in the ratio, yielding \cref{eq:path-events-counts-1}.
\end{proof}

We now consider sampling under the relaxed corpus-level constraint $|\sP|_\featurefunc \geq \binningtarget$ in place of $|\sP|_\featurefunc = \binningtarget$. To turn the exact-count binning representations of \cref{lemma:probset} into at-least probabilities, we left-multiply by the upper-triangular matrix of ones $\mM$.

\begin{lemma}[Tail Vector Interpretation]
    \label[lemma]{lemma:tail-interp}
    Let $X$ be a non-negative integer random variable with binning representation $\vu \in \binningsemiring$ of order $\semiringorder$, meaning $u_i = \pmeas(X = i)$ for $i < \semiringorder$ and $u_\semiringorder = \pmeas(X \geq \semiringorder)$. Let $\mM$ be the upper-triangular matrix of ones. Then for any $\binningindex \in \nrangez{\semiringorder}$,
    \begin{equation}
        (\mM \vu)_\binningindex = \pmeas(X \geq \binningindex).
    \end{equation}
\end{lemma}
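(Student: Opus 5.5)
The argument is a direct computation: unfold the matrix product coordinatewise and identify the resulting partial sum of $\vu$ as a tail probability. The matrix $\mM$ is indexed by $\nrangez{\semiringorder}$, so $\mM_{\binningindex \otherbinningindex} = 1$ if $\otherbinningindex \geq \binningindex$ and $0$ otherwise. Hence
\begin{equation*}
    (\mM \vu)_\binningindex = \sum_{\otherbinningindex=\binningindex}^{\semiringorder} u_\otherbinningindex .
\end{equation*}

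We then split this sum into the non-overflow and overflow parts:
\begin{equation*}
    (\mM \vu)_\binningindex = \sum_{\otherbinningindex=\binningindex}^{\semiringorder-1} \pmeas(X = \otherbinningindex) + \pmeas(X \geq \semiringorder).
\end{equation*}
The events $\{X = \binningindex\}, \ldots, \{X = \semiringorder - 1\}, \{X \geq \semiringorder\}$ are pairwise disjoint, and since $X$ takes values in the non-negative integers, their union is exactly $\{X \geq \binningindex\}$. Finite additivity then gives $(\mM\vu)_\binningindex = \pmeas(X \geq \binningindex)$.

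The only point needing care is the boundary case $\binningindex = \semiringorder$. There the first sum is empty and the identity reduces to $u_\semiringorder = \pmeas(X \geq \semiringorder)$, which is the definition of the overflow coordinate. Likewise, $\binningindex = 0$ yields total mass $1$, consistent with $X$ being non-negative.

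I expect no real obstacle here. The only thing to verify is the orientation convention: \emph{upper}-triangular means ones on and above the diagonal, so multiplication by $\mM$ produces suffix sums rather than prefix sums.
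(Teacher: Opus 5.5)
Your proposal is correct and follows the paper's proof: both expand $(\mM \vu)_n$ as the suffix sum $\sum_{j=n}^{R} u_j$ and identify it with $\pmeas(X \geq n)$, since $\{X = n\}, \ldots, \{X = R-1\}, \{X \geq R\}$ are disjoint events whose union is $\{X \geq n\}$. Your remarks on the boundary case $n = R$ and on the orientation of $\mM$ are extra detail that the paper leaves implicit.
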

\begin{proof}
    $(\mM \vu)_\binningindex = \sum_{\otherbinningindex = \binningindex}^{\semiringorder} u_\otherbinningindex$ by definition of $\mM$, and summing the disjoint events $\set{X = \binningindex}, \ldots, \set{X = \semiringorder - 1}, \set{X \geq \semiringorder}$ gives $\pmeas(X \geq \binningindex)$.
\end{proof}

\begin{restatable}[Sampling Path Event Counts (2)]{reTheorem}{samplingLengthsTheoremTwo}
    \label{theorem:occsamp-2}
    Under the setup of \cref{theorem:occsamp}, define the tail vector $\psum' \defeq \mM \psum$, with $\psum'_\binningindex = \pmeas(|\rvPath_\wfsa(\qinit)|_\featurefunc \geq \binningindex)$ by \cref{lemma:tail-interp}. For any $k \in \nrange{\numsamples}$, $m \in \nrangez{\binningtarget}$, and $\binningindex \in \nrangez{\binningtarget - m}$,
    \begin{equation} \label{eq:path-events-counts-2}
        \pmeas(|\apath_k|_\featurefunc \geq \binningindex \,\bigm|\, |\sP|_\featurefunc \geq \binningtarget, |\{\apath_j\}_{j < k}|_\featurefunc = m) = \frac{\psum'_{\binningindex} (\mM \binningexp{\psum}{\numsamples - k})_{\binningtarget - m - \binningindex} + \sum_{\otherbinningindex = \binningindex + 1}^{\binningtarget - m} \psum'_{\otherbinningindex} (\binningexp{\psum}{\numsamples - k})_{\binningtarget - m - \otherbinningindex}}{(\mM \binningexp{\psum}{\numsamples - k + 1})_{\binningtarget - m}}.
    \end{equation}
\end{restatable}
\begin{proof}
    We read the LHS of \cref{eq:path-events-counts-2} as the probability that path $\apath_k$ contributes at least $\binningindex$ occurrences of $\featurefunc$, conditional on the corpus $\sP$ having at least $\binningtarget$ total and the previously-sampled paths $\set{\apath_j}_{j < k}$ contributing exactly $m$. By Bayes' rule,
    \begin{equation}
        \pmeas(|\apath_k|_\featurefunc \geq \binningindex \,\bigm|\, |\sP|_\featurefunc \geq \binningtarget, |\{\apath_j\}_{j < k}|_\featurefunc = m)
        = \frac{\pmeas(|\apath_k|_\featurefunc \geq \binningindex, |\sP|_\featurefunc \geq \binningtarget, |\{\apath_j\}_{j < k}|_\featurefunc = m)}{\pmeas(|\sP|_\featurefunc \geq \binningtarget, |\{\apath_j\}_{j < k}|_\featurefunc = m)}.
    \end{equation}
    As in \cref{theorem:occsamp}, \cref{lemma:probset} applied to the disjoint subcorpora $\set{\apath_j}_{j < k}$, $\set{\apath_k}$, $\set{\apath_j}_{j > k}$ gives independent counts with exact-count binning representations $\binningexp{\psum}{k - 1}$, $\psum$, $\binningexp{\psum}{\numsamples - k}$. Tail (at-least) probabilities are then $\mM$-multiplied versions by \cref{lemma:tail-interp}.

    Let $X = |\apath_k|_\featurefunc$ and $Y = |\set{\apath_j}_{j > k}|_\featurefunc$. Path $\apath_k$ faces two lower bounds on $X$: the LHS demand $\binningindex$, and the deficit $(\binningtarget - m) - Y$ that $\apath_k$ must cover for the corpus to reach $\binningtarget$. So $X \geq \max(\binningindex, (\binningtarget - m) - Y)$, and this max collapses into one of three sub-cases depending on $Y$: $Y \geq \binningtarget - m$ (no deficit; $X \geq \binningindex$); $Y = \binningtarget - m - \otherbinningindex$ with $1 \leq \otherbinningindex \leq \binningindex$ (deficit at most $\binningindex$, still $X \geq \binningindex$); or $Y = \binningtarget - m - \otherbinningindex$ with $\otherbinningindex > \binningindex$ (deficit dominates, $X \geq \otherbinningindex$). Independence then gives
    \begin{subequations}
        \begin{align}
            \pmeas(X \geq \binningindex, |\sP|_\featurefunc \geq \binningtarget, |\{\apath_j\}_{j < k}|_\featurefunc = m) &= \psum'_\binningindex \cdot (\mM \binningexp{\psum}{\numsamples - k})_{\binningtarget - m} \cdot (\binningexp{\psum}{k - 1})_m \nonumber \\
            &\phantom{=}\, + \psum'_\binningindex \sum_{\otherbinningindex = 1}^{\binningindex} (\binningexp{\psum}{\numsamples - k})_{\binningtarget - m - \otherbinningindex} \cdot (\binningexp{\psum}{k - 1})_m \nonumber \\
            &\phantom{=}\, + \sum_{\otherbinningindex = \binningindex + 1}^{\binningtarget - m} \psum'_\otherbinningindex \cdot (\binningexp{\psum}{\numsamples - k})_{\binningtarget - m - \otherbinningindex} \cdot (\binningexp{\psum}{k - 1})_m \\
            &= \Big(\psum'_\binningindex \cdot (\mM \binningexp{\psum}{\numsamples - k})_{\binningtarget - m - \binningindex} + \sum_{\otherbinningindex = \binningindex + 1}^{\binningtarget - m} \psum'_\otherbinningindex \cdot (\binningexp{\psum}{\numsamples - k})_{\binningtarget - m - \otherbinningindex}\Big) (\binningexp{\psum}{k - 1})_m,
        \end{align}
    \end{subequations}
    using $(\mM \binningexp{\psum}{\numsamples - k})_{\binningtarget - m} + \sum_{\otherbinningindex = 1}^{\binningindex} (\binningexp{\psum}{\numsamples - k})_{\binningtarget - m - \otherbinningindex} = (\mM \binningexp{\psum}{\numsamples - k})_{\binningtarget - m - \binningindex}$ to combine the first two summands (extending the upper-tail sum down by $\binningindex$ exact terms).

    The denominator is the joint probability that the suffix totals at least $\binningtarget - m$ and the prefix exactly $m$, independent:
    \begin{subequations}
        \begin{align}
            \pmeas(|\sP|_\featurefunc \geq \binningtarget, |\{\apath_j\}_{j < k}|_\featurefunc = m) &= \pmeas(|\{\apath_j\}_{j \geq k}|_\featurefunc \geq \binningtarget - m) \cdot \pmeas(|\{\apath_j\}_{j < k}|_\featurefunc = m) \\
            &= (\mM \binningexp{\psum}{\numsamples - k + 1})_{\binningtarget - m} \cdot (\binningexp{\psum}{k - 1})_m.
        \end{align}
    \end{subequations}
    The $(\binningexp{\psum}{k - 1})_m$ factor cancels in the ratio, yielding \cref{eq:path-events-counts-2}.
\end{proof}

We now consider the at-least-once intervention used in \cref{sec:findings}: the corpus-level constraint is on the \emph{number of strings} containing at least one occurrence of $\featurefunc$ rather than on the total count. Replacing the raw count by the per-path indicator $\ind{\featurefunc}$ collapses each path to a Bernoulli draw.

\begin{restatable}[Sampling Path Event Counts (3)]{reTheorem}{samplingAtLeastOnceTheorem}
    \label{theorem:occsamp-atleastonce}
    Under the setup of \cref{lemma:probset} with $\semiringorder = 1$ and corpus size $\numsamples$, $\psum_1 = \pmeas(|\rvPath_\wfsa(\qinit)|_\featurefunc \geq 1)$ is the per-path probability of containing at least one occurrence (\cref{theorem:crux}). Define the per-path indicator $\ind{\featurefunc}(\apath) \defeq \ind{|\apath|_\featurefunc \geq 1}$ and its corpus aggregation $|\sP|_{\ind{\featurefunc}} \defeq \sum_{j=1}^{\numsamples} \ind{\featurefunc}(\apath_j)$, and let $M \in \nrangez{\numsamples}$ be a target value of $|\sP|_{\ind{\featurefunc}}$. For $k \in \nrange{\numsamples}$ and $m \in \nrangez{\min(M, k - 1)}$ with $M - m \leq \numsamples - k + 1$,
    \begin{equation}
        \pmeas(|\apath_k|_\featurefunc \geq 1 \,\bigm|\, |\sP|_{\ind{\featurefunc}} = M, |\{\apath_j\}_{j < k}|_{\ind{\featurefunc}} = m) = \frac{M - m}{\numsamples - k + 1}.
    \end{equation}
    In words: given that $m$ of the first $k - 1$ paths in the corpus contain at least one occurrence of $\featurefunc$ (out of $M$ such paths required across all $\numsamples$), path $\apath_k$ contains at least one with probability $(M - m) / (\numsamples - k + 1)$---the remaining at-least-one paths divided by the remaining paths to sample.
\end{restatable}
\begin{proof}
    Let $X_j \defeq \ind{\featurefunc}(\apath_j) \in \set{0, 1}$. Each $X_j$ indicates a single event of probability $\psum_1$ and is therefore Bernoulli($\psum_1$); the $X_j$ are i.i.d.\ since the paths are. The conditioning event $\set{|\sP|_{\ind{\featurefunc}} = M,\ |\set{\apath_j}_{j < k}|_{\ind{\featurefunc}} = m}$ rewrites as $\set{\sum_{j=1}^\numsamples X_j = M,\ \sum_{j < k} X_j = m}$, equivalently $\set{\sum_{j < k} X_j = m,\ \sum_{j \geq k} X_j = M - m}$. Since the prefix sum is independent of $\set{X_j}_{j \geq k}$, conditioning on the suffix sum alone gives the same conditional. Then, by Bayes' rule and the binomial pmf,
    \begin{align*}
        &\pmeas(|\apath_k|_\featurefunc \geq 1 \,\bigm|\, |\sP|_{\ind{\featurefunc}} = M, |\set{\apath_j}_{j < k}|_{\ind{\featurefunc}} = m) \\
        &\quad = \pmeas\!\bigl(X_k = 1 \,\bigm|\, \textstyle\sum_{j \geq k} X_j = M - m\bigr) \justification{prefix independent of $\set{X_j}_{j \geq k}$} \\
        &\quad = \frac{\pmeas\!\bigl(X_k = 1,\ \sum_{j > k} X_j = M - m - 1\bigr)}{\pmeas\!\bigl(\sum_{j \geq k} X_j = M - m\bigr)} \justification{Bayes; substitute $X_k = 1$} \\
        &\quad = \frac{\pmeas(X_k = 1)\,\pmeas\!\bigl(\sum_{j > k} X_j = M - m - 1\bigr)}{\pmeas\!\bigl(\sum_{j \geq k} X_j = M - m\bigr)} \justification{independence of $X_k$ and $\set{X_j}_{j > k}$} \\
        &\quad = \frac{\psum_1 \binom{\numsamples - k}{M - m - 1} \psum_1^{M - m - 1} (1 - \psum_1)^{\numsamples - k - M + m + 1}}{\binom{\numsamples - k + 1}{M - m} \psum_1^{M - m} (1 - \psum_1)^{\numsamples - k + 1 - M + m}} \justification{Bernoulli + binomial pmf} \\
        &\quad = \frac{\binom{\numsamples - k}{M - m - 1}}{\binom{\numsamples - k + 1}{M - m}} \justification{cancel $\psum_1^{M-m}$ and $(1-\psum_1)^{\numsamples - k + 1 - M + m}$} \\
        &\quad = \frac{M - m}{\numsamples - k + 1} \justification{$\binom{n}{r-1}/\binom{n+1}{r} = r/(n+1)$ with $n = \numsamples - k$, $r = M - m$}.
    \end{align*}
\end{proof}

The corresponding per-path sampling step does not need a new corollary: once $\binningindex_k \in \set{0, 1}$ has been drawn for path $\apath_k$, we sample $\apath_k$ via \cref{cor:symsamp} with $\binningindex = 0$ (a path with no occurrences) or \cref{cor:symsamp-2} with $\binningindex = 1$ (a path with at least one).

\samplingStringTheorem*
\begin{proof}
    The lifted \pfsaAcr{} $\samplingWfsa = (\states', \alphabet, \trans', \initf', \finalf')$ is defined as follows. Write $v = \featurefunc_\trans(\anedge{\stateq}{\sym}{w}{\stateq^\prime})$ for the event count of a transition; let $\psum(\stateq) \defeq \backwardweight{\featurefunc}(\stateq)$. Then:
    \begin{enumerate}[label=\textit{(\roman*)},noitemsep,nosep]
    \small
        \item $\states' = \states \times \nrangez{\binningindex}$,
        \item $\trans'\colon ((\stateq, r), \sym, (\stateq^\prime, r - v)) \mapsto \frac{1}{\psum(\stateq)_r} \big(\lifttrans(\anedge{\stateq}{\sym}{w}{\stateq^\prime}) \binningtimes \backwardweight{\featurefunc}(\stateq^\prime)\big)_r$,
        \item $\initf'\colon (\stateq, r) \mapsto \ind{\stateq = \qinit, r = \binningindex}$,
        \item $\finalf'\colon (\stateq, r) \mapsto \frac{1}{\psum(\stateq)_r} \liftfunc_\finalf(\stateq)_r$.
    \end{enumerate}
    The state coordinate $r$ tracks the remaining target count: each transition decrements it by $v$, and the final weight fires only when $r = \featurefunc_\finalf(\stateq)$.

    We first show that the weights of $\trans'$ and $\finalf'$ sum to $1$ for any $\stateq \in \states$ and $r \in \nrangez{\binningindex}$. Note that by definition, we have the following equalities
    \begin{subequations}
        \begin{align}
            \trans'((\stateq, r), \sym, (\stateq^\prime, r - v)) & \defeq \frac{1}{\backwardweight{\featurefunc}(\stateq)_r} \cdot \big(\lifttrans(\anedge{\stateq}{\sym}{w}{\stateq^\prime}) \binningtimes \backwardweight{\featurefunc}(\stateq^\prime)\big)_r \label{eq:trans-prime-def} \\
            & = \frac{1}{\backwardweight{\featurefunc}(\stateq)_r} \cdot w \cdot \big(\backwardweight{\featurefunc}(\stateq^\prime)\big)_{r - \featurefunc_\trans(\anedge{\stateq}{\sym}{w}{\stateq^\prime})} \label{eq:trans-prime-scalar} \\
            \finalf'(\stateq, r) & \defeq \frac{1}{\backwardweight{\featurefunc}(\stateq)_r} \cdot \liftfunc_\finalf(\stateq)_r \label{eq:final-prime-def} \\
            &= \frac{1}{\backwardweight{\featurefunc}(\stateq)_r} \cdot \ind{r = \featurefunc_\finalf(\stateq)} \finalf(\stateq) \label{eq:final-prime-scalar}
        \end{align}
    \end{subequations}
    From the equality
    \begin{equation}
        \backwardweight{\automaton}(\stateq) = \bigoplus_{\anedge{\stateq}{\sym}{w}{\stateq^\prime}} w \otimes \backwardweight{\automaton}(\stateq^\prime) \oplus \finalf(\stateq)
    \end{equation}
    in a general \wfsaAcr, we have that\footnote{For conciseness, we assume $\evv_{m} = 0$ for $m < 0$ and $m > \semiringorder$.}
    \begin{align}
        \big(\backwardweight{\featurefunc}(\stateq)\big)_r
        &= \bigoplus_{\anedge{\stateq}{\sym}{\vw}{\stateq^\prime}} \big(\vw \binningtimes \backwardweight{\featurefunc}(\stateq^\prime)\big)_r \oplus \big(\finalf_{\liftedautomaton}(\stateq)\big)_r \\
        &= \bigoplus_{\anedge{\stateq}{\sym}{w}{\stateq^\prime}} \big(\lifttrans(\anedge{\stateq}{\sym}{w}{\stateq^\prime}) \binningtimes \backwardweight{\featurefunc}(\stateq^\prime)\big)_r \oplus \big(\finalf_{\liftedautomaton}(\stateq)\big)_r \\
        &= \bigoplus_{\anedge{\stateq}{\sym}{w}{\stateq^\prime}} w \cdot \backwardweight{\featurefunc}(\stateq^\prime)_{r - \featurefunc_\trans(\anedge{\stateq}{\sym}{w}{\stateq^\prime})} \oplus \big(\finalf_{\liftedautomaton}(\stateq)\big)_r \\
        &= \bigoplus_{\anedge{\stateq}{\sym}{w}{\stateq^\prime}} w \cdot \backwardweight{\featurefunc}(\stateq^\prime)_{r - \featurefunc_\trans(\anedge{\stateq}{\sym}{w}{\stateq^\prime})} \oplus \ind{r = \featurefunc_\finalf(\stateq)} \finalf(\stateq)
    \end{align}
    Dividing by $\backwardweight{\featurefunc}(\stateq)_r$ matches the unfolded $\trans'$ and $\finalf'$ above, so $\sum_{\anedge{\stateq}{\sym}{w}{\stateq^\prime} \in \trans} \trans'((\stateq, r), \sym, (\stateq^\prime, r - v)) + \finalf'(\stateq, r) = 1$ for every $(\stateq, r)$---hence $\samplingWfsa$ is a valid \pfsaAcr{}, and the path-probability formula below applies.

    For $\apath = \anedge{(\qinit, \binningindex)}{a_1}{w_1}{} \cdots \anedge{}{a_m}{w_m}{(\stateq_m, r_m)}$---where $w_t$ denotes the corresponding base-automaton transition weight---we then have, using the unfolded $\trans'$ and $\finalf'$ above (throughout this proof, write $|\apath_{\leq t}|_\featurefunc \defeq \sum_{s=1}^{t} \featurefunc_\trans(\anedge{\stateq_{s-1}}{a_s}{w_s}{\stateq_s})$ for the transition-only feature sum on the first $t$ transitions of $\apath$, with $|\apath_{<t}|_\featurefunc \defeq |\apath_{\leq t-1}|_\featurefunc$; these omit the init and final contributions to $|\apath|_\featurefunc$):
    \begin{subequations}
        \begin{align}
            \pmeas(\rvPath_{\samplingWfsa}(\qinit) = \apath)
            &= \prod_{t = 1}^{m} \trans'((\stateq_{t-1}, r_{t-1}), a_t, (\stateq_t, r_t)) \cdot \finalf'(\stateq_m, r_m) \\
            &= \prod_{t = 1}^{m} \frac{1}{\big(\backwardweight{\featurefunc}(\stateq_{t - 1})\big)_{\binningindex - |\apath_{<t}|_\featurefunc}} \cdot \big(\lifttrans(\anedge{\stateq_{t - 1}}{a_t}{w_t}{\stateq_t}) \binningtimes \backwardweight{\featurefunc}(\stateq_t)\big)_{\binningindex - |\apath_{<t}|_\featurefunc} \nonumber \\
            & \phantom{=} \cdot \frac{1}{\big(\backwardweight{\featurefunc}(\stateq_m)\big)_{\binningindex - |\apath_{\leq m}|_\featurefunc}} \cdot \big(\liftfunc_\finalf(\stateq_m)\big)_{\binningindex - |\apath_{\leq m}|_\featurefunc} \justification{\cref{eq:trans-prime-def,eq:final-prime-def}} \\
            &= \prod_{t = 1}^{m} \frac{1}{\big(\backwardweight{\featurefunc}(\stateq_{t - 1})\big)_{\binningindex - |\apath_{<t}|_\featurefunc}} \cdot w_t \cdot \big(\backwardweight{\featurefunc}(\stateq_t)\big)_{\binningindex - |\apath_{<t}|_\featurefunc - \featurefunc_\trans(\anedge{\stateq_{t - 1}}{a_t}{w_t}{\stateq_t})} \nonumber \\
            & \phantom{=} \cdot \frac{1}{\big(\backwardweight{\featurefunc}(\stateq_m)\big)_{\binningindex - |\apath_{\leq m}|_\featurefunc}} \cdot \big(\liftfunc_\finalf(\stateq_m)\big)_{\binningindex - |\apath_{\leq m}|_\featurefunc} \justification{\cref{eq:trans-prime-scalar}} \\
            &= \prod_{t = 1}^{m} \frac{1}{\big(\backwardweight{\featurefunc}(\stateq_{t - 1})\big)_{\binningindex - |\apath_{<t}|_\featurefunc}} \cdot w_t \cdot \big(\backwardweight{\featurefunc}(\stateq_t)\big)_{\binningindex - |\apath_{\leq t}|_\featurefunc} \nonumber \\
            & \phantom{=} \cdot \frac{1}{\big(\backwardweight{\featurefunc}(\stateq_m)\big)_{\binningindex - |\apath_{\leq m}|_\featurefunc}} \cdot \big(\liftfunc_\finalf(\stateq_m)\big)_{\binningindex - |\apath_{\leq m}|_\featurefunc} \justification{$|\apath_{<t}|_\featurefunc + \featurefunc_\trans(\anedge{\stateq_{t-1}}{a_t}{w_t}{\stateq_t}) = |\apath_{\leq t}|_\featurefunc$} \\
            &= \frac{1}{\big(\backwardweight{\featurefunc}(\qinit)\big)_{\binningindex}} \cdot \prod_{t = 1}^{m} w_t \cdot \big(\liftfunc_\finalf(\stateq_m)\big)_{\binningindex - |\apath_{\leq m}|_\featurefunc} \justification{Telescoping product} \\
            &= \frac{1}{\big(\backwardweight{\featurefunc}(\qinit)\big)_{\binningindex}} \cdot \prod_{t = 1}^{m} w_t \cdot \ind{\binningindex - |\apath_{\leq m}|_\featurefunc = \featurefunc_\finalf(\stateq_m)} \cdot \finalf(\stateq_m) \justification{definition of $\liftfunc_\finalf$ (\cref{def:countingautomaton})} \\
            &= \frac{1}{\big(\backwardweight{\featurefunc}(\qinit)\big)_{\binningindex}} \cdot \prod_{t = 1}^{m} w_t \cdot \finalf(\stateq_m) \cdot \ind{\binningindex - |\apath_{\leq m}|_\featurefunc = \featurefunc_\finalf(\stateq_m)} \\
            &= \frac{1}{\big(\backwardweight{\featurefunc}(\qinit)\big)_{\binningindex}} \cdot \pmeas(\rvPath_\wfsa(\qinit) = \baseversion{\apath}) \cdot \ind{\binningindex - |\apath_{\leq m}|_\featurefunc = \featurefunc_\finalf(\stateq_m)} \justification{path probability in $\wfsa$ (\cref{eq:path-prob-def})} \\
            &= \frac{1}{\big(\backwardweight{\featurefunc}(\qinit)\big)_{\binningindex}} \cdot \pmeas(\rvPath_\wfsa(\qinit) = \baseversion{\apath}, |\apath|_\featurefunc = \binningindex) \justification{$\featurefunc_\initf(\qinit) = 0$ implies $|\apath|_\featurefunc = |\apath_{\leq m}|_\featurefunc + \featurefunc_\finalf(\stateq_m)$} \\
            &= \frac{1}{\pmeas(|\rvPath_\wfsa(\qinit)|_\featurefunc = \binningindex)} \cdot \pmeas(\rvPath_\wfsa(\qinit) = \baseversion{\apath}, |\apath|_\featurefunc = \binningindex) \justification{\cref{theorem:crux}} \\
            &= \pmeas(\rvPath_\wfsa(\qinit) = \baseversion{\apath} \,\bigm|\, |\rvPath_\wfsa(\qinit)|_\featurefunc = \binningindex).
        \end{align}
    \end{subequations}
\end{proof}

For the at-least variant of \cref{cor:symsamp}---where the per-path constraint is $|\rvPath_\wfsa(\qinit)|_\featurefunc \geq \binningindex$ instead of $= \binningindex$---we construct an analogous lifted \pfsaAcr{} $\atLeastSamplingWfsa$ using the accumulated-tail backward weights $\mM \backwardweight{\featurefunc}$ (\cref{lemma:tail-interp}) in place of the exact-count $\backwardweight{\featurefunc}$.

\begin{restatable}[Constrained String Sampling (2)]{reTheorem}{samplingStringTheoremTwo} \label{cor:symsamp-2}
    Define $\atLeastSamplingWfsa$ analogously to $\samplingWfsa$ in \cref{cor:symsamp}, but using the accumulated-tail backward weights $\psum'(\stateq) = \mM \backwardweight{\featurefunc}(\stateq)$ from \cref{lemma:tail-interp} in place of $\backwardweight{\featurefunc}(\stateq)$, with the counter clamped at $0$ on decrement. Let $\rvPath_{\atLeastSamplingWfsa}(\qinit)$ be a random path drawn from $\atLeastSamplingWfsa$. For any per-path target $\binningindex \in \nrangez{\binningtarget}$ and any $\apath \in \paths(\atLeastSamplingWfsa)$,
    \begin{equation}
        \pmeas(\rvPath_\wfsa(\qinit) = \baseversion{\apath} \,\bigm|\, |\rvPath_\wfsa(\qinit)|_\featurefunc \geq \binningindex) = \pmeas(\rvPath_{\atLeastSamplingWfsa}(\qinit) = \apath).
    \end{equation}
\end{restatable}
\begin{proof}
    By definition, we have the following equalities
    \begin{subequations}
        \begin{align}
            \trans''((\stateq, r), \sym, (\stateq^\prime, \max(0, r - v))) & \defeq \frac{1}{\psum'(\stateq)_r} \cdot \big(\mM(\lifttrans(\anedge{\stateq}{\sym}{w}{\stateq^\prime}) \binningtimes \backwardweight{\featurefunc}(\stateq^\prime))\big)_r \\
            & = \frac{1}{\psum'(\stateq)_r} \cdot w \cdot \psum'(\stateq^\prime)_{\max(0, r - \featurefunc_\trans(\anedge{\stateq}{\sym}{w}{\stateq^\prime}))} \\
            \finalf''(\stateq, r) & \defeq \frac{1}{\psum'(\stateq)_r} \cdot \big(\mM \liftfunc_\finalf(\stateq)\big)_r \\
            &= \frac{1}{\psum'(\stateq)_r} \cdot \ind{r \leq \featurefunc_\finalf(\stateq)} \finalf(\stateq).
        \end{align}
    \end{subequations}
    The next-state probabilities sum to $1$ at every $(\stateq, r)$ by the same argument as in \cref{cor:symsamp} (applied to $\psum'$ in place of $\psum$). The path-probability derivation mirrors \cref{cor:symsamp}: for $\apath = \anedge{(\qinit, \binningindex)}{a_1}{w_1}{} \cdots \anedge{}{a_m}{w_m}{(\stateq_m, r_m)}$ (with $w_t$ the corresponding base-automaton transition weight), the same telescoping product gives the analog of \cref{cor:symsamp}'s derivation up to the final-state factor: there the indicator $\ind{r_m = \featurefunc_\finalf(\stateq_m)}$ captured the exact-count event, whereas here $\finalf''$ contributes $\ind{r_m \leq \featurefunc_\finalf(\stateq_m)}$, which equals $\ind{|\apath|_\featurefunc \geq \binningindex}$ in both clamp regimes (unclamped: $r_m = \binningindex - |\apath_{\leq m}|_\featurefunc$, so the indicator reads $|\apath_{\leq m}|_\featurefunc + \featurefunc_\finalf(\stateq_m) \geq \binningindex$, which equals $|\apath|_\featurefunc \geq \binningindex$ via $\featurefunc_\initf(\qinit) = 0$; clamped at $r_m = 0$: the indicator is vacuously true and $|\apath_{\leq m}|_\featurefunc \geq \binningindex$ already, so $|\apath|_\featurefunc \geq \binningindex$ as well). Substituting $\psum'$ for $\backwardweight{\featurefunc}$ and $\ind{|\apath|_\featurefunc \geq \binningindex}$ for the exact-count indicator throughout the \cref{cor:symsamp} chain yields
    \begin{equation}
        \pmeas(\rvPath_{\atLeastSamplingWfsa}(\qinit) = \apath) = \pmeas(\rvPath_\wfsa(\qinit) = \baseversion{\apath} \,\bigm|\, |\rvPath_\wfsa(\qinit)|_\featurefunc \geq \binningindex).
    \end{equation}
\end{proof}

\section{Naïve Sampling and Runtime}
\label{app:sampling-supplement}

\subsection{Naïve Sampling Baselines}
\label{app:naive}

\paragraph{Rejection Sampling.}
Rejection sampling is the most direct approach to sample from $\pmeas(\rvData \mid \rvAutomaton=\wfsa, \dointervene(\rvData\in \property))$. Suppose we target transitions emitting $\symba$ and wish to sample a corpus of $\numsamples$ strings in which $\symba$ occurs exactly $\binningtarget$ times overall. Draw a corpus from $\pLM$ unconstrained; if the total $\symba$-count is $\binningtarget$, return it; otherwise discard and repeat. The acceptance probability decays exponentially in the gap between $\binningtarget$ and the expected $\symba$-count of an unconstrained corpus, so the expected number of corpus draws grows exponentially in that gap.\looseness=-1

\paragraph{Intersection-based Sampling.}
A more structured approach decomposes the problem into per-string sampling. First, fix the corpus size $\numsamples$ and sample per-string occurrence counts $\binningindex_1, \ldots, \binningindex_\numsamples \in \N$ with $\sum_{k=1}^{\numsamples} \binningindex_k = \binningtarget$. This reduces the task to drawing, for each per-string count $\binningindex_k$, a string $\str \sim \pLM(\cdot \mid \str \text{ has exactly } \binningindex_k \text{ symbols } \symba)$. Using standard automata-theoretic constructions, we can sample each such string by:
\begin{enumerate*}[label=\textit{(\roman*)}]
    \item constructing an \fsaAcr{} $\wfsa_{\symba, \binningindex_k}$ that recognizes strings with exactly $\binningindex_k$ $\symba$'s;
    \item intersecting $\wfsa_{\symba, \binningindex_k}$ with $\wfsa$ to obtain $\wfsa_{\symba, \binningindex_k} \cap \wfsa$, which recognizes strings with exactly $\binningindex_k$ $\symba$'s that are also accepted by $\wfsa$;
    \item renormalizing $\wfsa_{\symba, \binningindex_k} \cap \wfsa$ via weight pushing \citep{Mohri2009} to obtain a \pfsaAcr; and
    \item ancestral-sampling a string from this \pfsaAcr.
\end{enumerate*}
Intersection yields an automaton with $\bigO(|\states|\binningindex_k)$ states, and weight pushing is cubic in the number of states, so the total cost across $\binningindex_k \in \{0, \ldots, \binningtarget\}$ is $\bigO(|\states|^3 \binningtarget^4)$. The constrained sampler of \cref{sec:constrained-sampling} improves this to $\bigO((|\states|^3 + \numsamples)\binningtarget^2 + \numsamples \ell)$ (\cref{appendix:runtime}).\looseness=-1

\subsection{Runtime of Constrained Sampling}
\label{appendix:runtime}

We analyze the runtime of \samplepreprocessfn, \samplepathfn, and \samplecorpusfn from \cref{sec:constrained-sampling} in turn.
Let $|\trans|$ be the number of transitions of $\wfsa$, $\numsamples$ the number of strings, $\binningtarget$ the corpus-level target count, and $\ell$ a bound on the length of the sampled paths.

\paragraph{\samplepreprocessfn.}
Lifting $\wfsa$ to the binning automaton $\liftedautomaton$ touches each transition once: $\bigO(|\trans|)$.
Computing the backward weights $\backwardweight{\featurefunc}$ with Lehmann's algorithm \citep{LEHMANN197759} performs $\bigO(\nstates^3)$ binning-semiring operations, each costing $C = \bigO(\binningtarget^2)$ in a general base semiring or $\bigO(\binningtarget \log \binningtarget)$ using FFT over the reals, for $\bigO(\nstates^3 \binningtarget^2)$ total.
Tabulating the per-state outgoing transitions $\outarcs$ is another $\bigO(|\trans|)$ pass.
Precomputing the per-action binning-weight table $\bm{W}$ costs $\bigO((|\trans| + \nstates)\binningtarget)$: each lifted transition weight $\trans_\featurefunc(\stateq, a, \stateq^\prime)$ has a single nonzero coordinate, so $\trans_\featurefunc(\stateq, a, \stateq^\prime) \binningtimes \backwardweight{\featurefunc}(\stateq^\prime)$ reduces to a scaled shift of $\backwardweight{\featurefunc}(\stateq^\prime)$ computable in $\bigO(\binningtarget)$ rather than a full $\bigO(\binningtarget^2)$ convolution (and $\finalf_\featurefunc(\stateq)$ supplies the $\eos$ slot directly). This is dominated by the cubic Lehmann pass.
Adding these, \samplepreprocessfn runs in $\bigO(\nstates^3 \binningtarget^2)$ time, or $\bigO(\nstates^3 \binningtarget \log \binningtarget)$ with FFT.

\paragraph{\samplepathfn{} (one call).}
At each step, \samplepathfn draws one action from the distribution over $\outarcs(\stateq) \cup \{\eos\}$ proportional to the slice $\bm{W}(\stateq, \cdot)_{\binningindex - j}$. Drawing from the slice naively costs $\bigO(\max_\stateq |\outarcs(\stateq)|) \leq \bigO(\nsymbols)$ per step. Precomputing an alias table~\citep{walker1977,vose1991} for each state $\stateq$ and remaining count $r \in \nrangez{\binningtarget}$ (the index $\binningindex - j$ at which $\bm{W}(\stateq, \cdot)$ is read) instead makes every draw $\bigO(1)$; building these tables from $\bm{W}$ costs $\bigO((|\trans| + \nstates)\binningtarget)$ (one pass over each outgoing transition and the $\eos$ slot at each $r$), dominated by the cubic Lehmann pass. A path of length $\ell$ then costs $\bigO(\ell)$.

\paragraph{\samplecorpusfn.}
\samplecorpusfn first invokes \samplepreprocessfn at the cost above.
Phase~1 computes $\binningexp{\psum}{k}$ for $k \in \{1, \ldots, \numsamples - 1\}$ iteratively (one binning multiplication per $k$) and draws each $\binningindex_k$ by scanning $\bigO(\binningtarget)$ proportional weights, for $\bigO(\numsamples \binningtarget^2)$ (or $\bigO(\numsamples \binningtarget \log \binningtarget)$ with FFT).
Phase~2 makes $\numsamples$ calls to \samplepathfn, each $\bigO(\ell)$, for $\bigO(\numsamples \ell)$.
Adding the three contributions gives
\[
\bigO\!\bigl((\nstates^3 + \numsamples)\binningtarget^2 + \numsamples \ell\bigr),
\]
or, with FFT in the binning multiplication,
\[
\bigO\!\bigl((\nstates^3 + \numsamples)\binningtarget \log \binningtarget + \numsamples \ell\bigr).
\]
Whether the cubic term ($\nstates^3 \binningtarget^2$), the per-string-count term ($\numsamples \binningtarget^2$), or the path-sampling term ($\numsamples \ell$) dominates depends on the relative sizes of $\nstates$, $\numsamples$, $\binningtarget$, and $\ell$.
In practice, we find this approach to be two orders of magnitude faster than rejection sampling.

\section{Targeted KL Divergence via Decomposition}
\label{appendix:decomposedkl}

We are interested in measuring the learnability of targeted features. To this end, we derive a decomposed KL divergence that works on the transition, symbol, or state level between an automaton and a trained LM.

Let $\pautomaton$ be an LM defined by a \dpfsaAcr{} $\automaton$ over states $\states$ and symbols $\alphabet$.
Any string $\klstr$ sampled from $\pautomaton$ decomposes into transitions consisting of states $\stateq$, symbols $\sym$, and weights $\weight$.
Given another LM $\pneural$ and an event function $\featurefunc$, we decompose the KL divergence to analyze how well $\pneural$ captures the target transitions $\sT$ (\cref{sec:sampling}).
At each step, $\pautomaton$ takes transition $\trans$ with probability $\weight$, while $\pneural$ predicts the next symbol given the history $\strlt$ of all symbols preceding position $\tstep$.
As in \cref{sec:causal-study}, the next-symbol distribution $\pautomaton(\cdot \mid \stateq)$ is supported on $\alphabeteos$ with $\pautomaton(\eos \mid \stateq) = \finalf(\stateq)$.
Recall the prefix probability $\pPrefix$ from \cref{eq:prefprob}.

Throughout, let
\begin{equation}
    \pprefix(\stateq) \defeq \sum_{\klstr \in \kleene{\alphabet}} \pPrefix(\klstr) \ind{\extendedtrans(\klstr) = \stateq},
    \qquad
    \pprefix(\stateq,\sym) \defeq \pprefix(\stateq) \pautomaton(\sym \mid \stateq),
    \qquad
    \pprefix(\sym) \defeq \sum_{\stateq\in\states}\pprefix(\stateq, \sym).
\end{equation}

We have that
\begin{subequations}
    \label{eq:kl-chain}
    \begin{align}
        \KL(\pautomaton \mid\mid \pneural)
        &= \E_{\klstr \sim \pautomaton}
        \bigl [\log\frac{\pautomaton(\klstr)}{\pneural(\klstr)} \bigr] \\ 
        &= \sum_{\klstr \in \kleene{\alphabet}}
        \pPrefix(\klstr)
        \KL \bigl(\pautomaton(\cdot\mid\klstr) \mid\mid \pneural(\cdot\mid\klstr) \bigr)
        \justification{chain rule on prefixes}
    \end{align}
\end{subequations}

\paragraph{State‑wise decomposition.}
Group every history by the state it reaches, $\stateq = \extendedtrans(\klstr)$:
\begin{subequations}
    \label{eq:kl-state}
    \begin{align}
        \KL(\pautomaton \mid\mid \pneural)
        &= \sum_{\stateq\in\states}
        \sum_{\substack{\klstr\in\kleene{\alphabet}\\\extendedtrans(\klstr)=\stateq}}
        \pPrefix(\klstr)
        \KL \bigl(\pautomaton(\cdot\mid\stateq)\bigl\|\pneural(\cdot\mid\klstr)\bigr)
        \justification{insert $\ind{\extendedtrans(\klstr)=\stateq}$} \\
        &= \sum_{\stateq\in\states}
        \pprefix(\stateq) \underbrace{\sum_{\substack{\klstr\in\kleene{\alphabet}\\\extendedtrans(\klstr)=\stateq}} \frac{\pPrefix(\klstr)}{\pprefix(\stateq)}  \KL( \pautomaton(\cdot \mid \stateq) \mid\mid \pneural(\cdot \mid \klstr))}_{\text{Unweighted per‑state contribution}}
    \end{align}
\end{subequations}

\paragraph{Transition‑wise decomposition.}
Insert the next symbol $\sym$ and write
$\pprefix(\stateq,\sym)=\pprefix(\stateq)\pautomaton(\sym\mid\stateq)$:
\begin{subequations}
    \label{eq:kl-transition}
    \begin{align}
        \KL(\pautomaton \mid\mid \pneural)
        &= \sum_{\stateq\in\states}
        \pprefix(\stateq)
        \sum_{\sym\in\alphabeteos}
        \pautomaton(\sym\mid\stateq)
        \sum_{\substack{\klstr\in\kleene{\alphabet}\\\extendedtrans(\klstr)=\stateq}} \frac{\pPrefix(\klstr)}{\pprefix(\stateq)}
        \log\frac{\pautomaton(\sym\mid\stateq)}
        {\pneural(\sym\mid\klstr)} \\
        &= \sum_{(\stateq,\sym)\in\states\times\alphabeteos}
        \pprefix(\stateq,\sym)
        \underbrace{\sum_{\substack{\klstr\in\kleene{\alphabet}\\\extendedtrans(\klstr)=\stateq}} \frac{\pPrefix(\klstr)}{\pprefix(\stateq)} \log\frac{\pautomaton(\sym\mid\stateq)}
        {\pneural(\sym\mid\klstr)}}_{\text{Unweighted per‑transition contribution}} \\
    \end{align}
\end{subequations}

\paragraph{Symbol‑wise decomposition.}
Marginalize over states to obtain the symbol prior
$\pprefix(\sym)=\sum_{\stateq\in \states}\pprefix(\stateq,\sym)$:
\begin{subequations}
    \label{eq:kl-symbol}
    \begin{align}
        \KL(\pautomaton \mid\mid \pneural)
        &= \sum_{\sym\in\alphabeteos}
        \sum_{\klstr\in\kleene{\alphabet}}
        \pPrefix(\klstr)\,\pautomaton(\sym\mid\extendedtrans(\klstr))
        \log\frac{\pautomaton(\sym\mid\extendedtrans(\klstr))}
        {\pneural(\sym\mid\klstr)} \\
        &= \sum_{\sym \in \alphabeteos}
        \pprefix(\sym)
        \underbrace{
        \sum_{\klstr \in \kleene{\alphabet}}
        \frac{
        \pPrefix(\klstr)
        \pautomaton(\sym\mid\extendedtrans(\klstr))
        }{\pprefix(\sym)}
        \log\frac{\pautomaton(\sym\mid\extendedtrans(\klstr))}
        {\pneural(\sym\mid\klstr)}
        }_{\text{Unweighted per‑symbol contribution}} .
    \end{align}
\end{subequations}

\section{Experimental Setup Details}
\label{appendix:experimental-details}

\paragraph{Intervention Sampling.}
All our interventions count transitions in the corpus; the event function specifies which transitions count. We study two predicates experimentally: \textbf{symbol}-level, where the event fires on all transitions emitting a chosen symbol, and \textbf{state}-level, where the event fires on all transitions out of a chosen state.
These interventions are best described with $\dointervene$-notation introduced in \cref{sec:causality}.
Each of these is captured by some property $\property\subseteq\dataSet$. By intervening on it, we sample according to
$\stringset \sim \pmeas(\cdot \mid \dointervene(\property=\propertyEl))$, $\propertyEl\in\property$, where $\pmeas$ denotes the data-distribution probability measure under intervention (formally a Markov kernel; see \cref{sec:markov-kernels}).
For all three interventions, the event function $\featurefunc = (\featurefunc_\initf, \featurefunc_\trans, \featurefunc_\finalf)$ has identically-zero initial and final components ($\featurefunc_\initf \equiv 0$, $\featurefunc_\finalf \equiv 0$); only the transition component varies. Specifically:
\begin{enumerate}[label=\textit{(\roman*)},noitemsep,nosep]
    \item Symbol intervention on $\symbolinterv$: $\featurefunc_{\symbolinterv,\trans}(\stateq,\sym,\stater)\defeq\ind{\sym=\symbolinterv}$;
    \item Transition intervention on $\transinterv$: $\featurefunc_{\transinterv,\trans}(\stateq,\sym,\stater)\defeq\ind{(\stateq,\sym,\stater)=\transinterv}$;
    \item State intervention on $\stateinterv$: $\featurefunc_{\stateinterv,\trans}(\stateq,\sym,\stater)\defeq\ind{\stateq=\stateinterv}$.
\end{enumerate}

\paragraph{Ancestral Sampling.}
For ancestral sampling, we start the process from the initial state $\qinit$.
We then sample the next symbol by recursively selecting the transitions according to the {\pfsaAcr}'s probability distribution. For example, given a state $\stateq$ and the transition $\anedge{\stateq}{\sym}{\weight}{\stater}$, the conditional probability of sampling $\sym$, i.e. probability $\pmeas(\sym\mid \stateq)$, is $\weight$. Sampling terminates at state $\stateq$ with probability $\finalf(\stateq)$; otherwise we draw an outgoing transition $\anedge{\stateq}{\sym}{w}{\stateq'}$ with probability $w$. The \pfsaAcr{} normalization $\finalf(\stateq) + \sum_{\anedge{\stateq}{\sym}{w}{\stateq'} \in \trans} w = 1$ ensures this is a valid distribution at every state.

\paragraph{Neural Language Models.}
We conduct experiments using both LSTM and transformer-based LMs. The configuration of the neural LMs, including specific hyperparameters, is given in \cref{appendix:lmhyperparams}.

\paragraph{Error bars.}
Error bars show one standard error of the mean over the runs grouped at each plotted $x$-coordinate: for causal curves the grouping is by discrete intervention count, and for correlational curves it is by occurrence-count bin (each bin gets its own SEM, independently of the others).

\section{Model and Training Details} \label{appendix:lmhyperparams}

We now detail the hyperparameters used for training both the LSTM and transformer models. While they share common settings such as batch size and optimizer, they differ in architectural design and training specifics. The loss we use is over the full KL divergence at each symbol position in the training data against the corresponding automaton (formula given in \cref{sec:findings}); this can be seen as a sample-efficient distillation of the sampled automata.

We use a parameter budget of 128k for both LSTMs and transformers and train all models on CPU. For layer norm, we initialize weights to 1 and biases to 0. When dropout is applicable, we use a value of 0.1. Parameters are uniformly initialized from $[-0.1,0.1]$. We shuffle data at training, with a max batch size of 128 tokens, and a learning rate of 0.01. We use the Adam optimizer, and clip gradients with a threshold of 5 using $L^2$ norm scaling. The learning rate is multiplied by 0.5 after 5 checkpoints with no decrease in the loss on the validation set. Training is halted after 10 checkpoints if no improvement is observed.

Sampling of an individual dataset typically takes within minutes on a V100 32 GB GPU, as we make use of a GPU for the allsum calculations. The training of an individual dataset typically runs in minutes on a single modern CPU core using the LSTM architecture, while the transformer models can run longer.
Our large-scale experiments were run on a cluster with 76 nodes, making use of 16 CPU cores on each node, and occasionally on a single machine with 2 x AMD EPYC 9354 32-Core Processor and two NVIDIA H100. The \parity + star-free experiment and the varying topology experiment run in under 12 hours, making use of the full CPU/GPU resources for both sampling and training. The larger-scale run took a few days to sample and train in parallel.

\section{Additional Figure}
An additional figure for the at-least-once sampling variant of the automaton in \cref{fig:simple_machine} is given in \cref{fig:simple_machine_plots_alo}.

\begin{figure}
    \centering
    \includegraphics[width=0.5\linewidth]{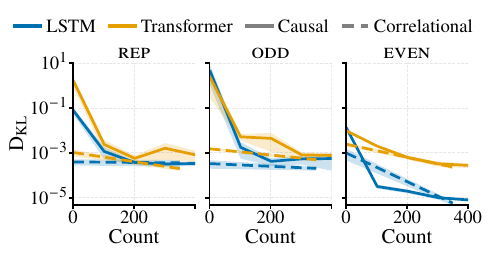}
    \caption{Monte Carlo estimates of the state-wise learnability for the automaton given in \cref{fig:simple_machine} using the at-least-once intervention.}
    \label{fig:simple_machine_plots_alo}
\end{figure}

\end{document}